\pdfoutput=1
\documentclass{article}

\usepackage[utf8]{inputenc}
\usepackage[T1]{fontenc}
\usepackage[margin=1in]{geometry}
\usepackage{parskip}
\usepackage{url}
\usepackage{wrapfig}
\usepackage{booktabs}
\usepackage{amsfonts}
\usepackage{nicefrac}
\usepackage{microtype}
\usepackage{multicol}
\usepackage{graphicx}
\usepackage{amsthm}
\usepackage{bbm}
\usepackage{enumitem}
\usepackage{bm}
\usepackage[export]{adjustbox}
\usepackage{stmaryrd}
\usepackage{xifthen}
\usepackage{xargs}
\usepackage{amsmath,amssymb}
\usepackage{mathtools}
\usepackage{dsfont}
\usepackage{pifont}
\usepackage[dvipsnames]{xcolor}
\usepackage{algorithm,algorithmic}
\usepackage[authoryear]{natbib}
\usepackage{caption}
\usepackage{subcaption}
\usepackage{float}
\usepackage{multirow}
\usepackage[colorlinks=true,citecolor=blue,urlcolor=blue,linkcolor=blue]{hyperref}
\usepackage{cleveref}
\newtheorem{theorem}{Theorem}
\newtheorem{lemma}{Lemma}
\newtheorem{corollary}{Corollary}
\newtheorem{proposition}{Proposition}
\theoremstyle{definition}

\newtheorem{example}{Example}

\newcommand{\esp}{\mathbb{E}}
\newcommand{\one}{\mathbbm{1}}

\newcommand{\bcal}{\mathcal{B}}
\newcommand{\calW}{\mathcal{W}}
\newcommand{\fcal}{\mathcal{F}}
\newcommand{\kcal}{\mathcal{K}}
\newcommand{\ncal}{\mathcal{N}}
\newcommand{\rset}{\mathbb{R}}
\newcommand{\run}{\mathrm{Run}}
\newcommand{\testrun}{\run_{\mathrm{test}}}
\newcommand{\es}{\mathrm{e}}
\newcommand{\cs}{\mathrm{c}}
\newcommand{\vcal}{\mathcal{V}}
\newcommand{\xcal}{\mathcal{X}}

\DeclareMathOperator{\KL}{KL}

\title{\vspace{-1cm}Valid Stopping in Adaptive Generator--Verifier Loops}
\author{
  Mahmoud Hegazy$^{1,2}$ \and Michael I. Jordan$^{2,3}$ \and Aymeric Dieuleveut$^{1}$ \\
  \\
  $^1$ CMAP, \'Ecole polytechnique, IP Paris, France \\
  $^2$ Inria Paris, France \qquad $^3$ EECS, UC Berkeley, USA
}
\date{}

\begin{document}
\maketitle

\begin{abstract}
  Numerous agentic workflows are based on a generator--verifier loop: a generator proposes candidates, a cheap verifier scores them, and the workflow terminates when a proposal is verified as good enough. The verifier typically proxies a more costly ground-truth oracle, and as the generator searches adaptively against it, false acceptances may accumulate. Proposals can pass the proxy but fail under the costlier ground-truth check. We study when to stop these loops while controlling the false discovery rate of the accepted proposals. Our construction introduces tools of independent interest in distribution-free statistical testing and conformal risk control, including analysis of $e$-values constructed through index betting and a novel conformal risk control procedure for non-monotone losses. We validate the approach in synthetic settings and on a protein-design benchmark.
\end{abstract}

\section{Introduction}\label{sec:introduction}

A large class of inference-time systems operates as \emph{generator--verifier loops}. A generator proposes candidates, a verifier supplies scores or pass/fail feedback, and the search stops when one or more candidates look acceptable. This template appears in code generation with unit-test filtering \citep{chen2021evaluating,li2022alphacode}, theorem proving with proof checkers \citep{polu2020generating}, and mathematical reasoning with learned verifiers \citep{cobbe2021training}. Related iterative self-feedback and verbal-reflection methods revise candidates using model-generated feedback \citep{madaan2023selfrefine,shinn2023reflexion}. The advantage of using verifiers can be substantial from a computational and economic point of view because, unlike the ground-truth oracle, the verifier is cheap and fast to query repeatedly.

The substitution of a verifier for a ground-truth oracle, however, also induces a complex set of statistical and strategic problems. Once the generator adapts to past verifier outputs, it can search for candidates that exploit the proxy's weaknesses. A verifier that is reliable marginally, or on isolated queries, can accumulate false acceptances under reuse. Recent analyses of inference-time search make this over-optimization phenomenon explicit \citep{khalaf2025inference,huang2025is}. A key problem is thus that of stopping a generator--verifier loop while controlling the quality of the accepted set as measured with the oracle, not merely with the proxy.

We present a statistical procedure that halts after collecting $\kappa \ge 1$ accepted proposals while controlling the false discovery rate (FDR) of the accepted set---i.e., the expected fraction of accepted proposals that the oracle would reject---below a user-specified level $\alpha$. Using a small set of historical runs, we show how to deploy this procedure on an unobserved run where the true correctness labels are hidden. Formally, at round $t$, the generator proposes $X_t$, the verifier returns a score $S_t$ and feedback $V_t$, with the oracle label $Y_t \in \{0,1\}$  unobserved. We place no restriction on how $X_t$ depends on the past history $H_{t-1}=(X_1,V_1,S_1,\ldots,X_{t-1},V_{t-1},S_{t-1})$. The procedure returns a set of accepted proposals among $X_1, \dots, X_t$ with cardinality smaller than $\kappa$. This covers regimes in which the generator adapts aggressively to prior failures and verifier corrections. When $\kappa=1$, the guarantee controls the FDR of the first nonempty returned set, which in the commonly employed single-return setting is exactly the probability of a false first acceptance.

\paragraph{Contributions.} We solve this stopping problem by combining three ingredients. First, as shown in Section~\ref{sec:marginal}, we construct marginally valid local $e$-values through an index-betting normalization \citep{koning2023measuring}. The same section gives a finite-sample power analysis with matching upper and lower bounds under a fractional-moment condition on the likelihood ratio, and a weaker Kullback--Leibler (KL) divergence condition under which the finite-batch normalization penalty vanishes. Second, Section~\ref{sec:crc} introduces \emph{robust-core} conformal risk control (CRC), a finite-sample calibration method for CRC with non-monotone losses. Third, in Section~\ref{sec:fdr} we show how to combine these local $e$-values with the weighted online $e$-Benjamini--Hochberg ($e$-BH) procedure \citep{fischer2024onlineebh} for stopped FDR control under arbitrary within-run dependence and how this addresses the general generator--verifier stopping problem.

\paragraph{Outline.} Section~\ref{sec:overview} formalizes the run-level model and gives a quick overview of the full approach. Section~\ref{sec:related} positions our construction relative to related literature. Section~\ref{sec:marginal} and Section~\ref{sec:crc} formally introduce our $e$-value construction and robust-core CRC. Section~\ref{sec:fdr} presents the full procedure together with a synthetic illustration. Section~\ref{sec:experiments} reports experiments on a MotifBench-derived protein-design benchmark \citep{zheng2025motifbench}. In that benchmark, RFdiffusion \citep{watson2023novo} or Proteina \citep{geffner2025proteina} generates candidate backbones, ProteinMPNN \citep{dauparas2022robust} designs sequences, ESMFold \citep{lin2023evolutionary} provides a cheap structural proxy, and AlphaFold~3 \citep{abramson2024accurate} serves as the more expensive structural-prediction oracle that defines the benchmark label.

\section{Setting and Overview}\label{sec:overview}

Each looped episode is referred to as a \emph{run}, which is indexed by rounds $t=1,\ldots,T$. We note that our approach can be extended to runs of variable length. Nonetheless, we restrict our attention to a fixed horizon $T$ for simplicity. At round $t$, the generator outputs a proposal $X_t \in \xcal$, the verifier returns feedback $V_t \in \vcal$ together with a score $S_t\in\mathcal{S}\subseteq\mathbb{R}$, and the ground-truth oracle would assign a label $Y_t\in\{0,1\}$, where $Y_t=1$ means that the proposal is correct. This label is generally not available during the run. Define the \emph{history}
\begin{equation*}
  H_t \coloneqq (X_1,V_1,S_1,\ldots,X_t,V_t,S_t),
\end{equation*}
and set $\fcal_t \coloneqq \sigma(H_t)$ to be the induced filtration. Conditionally on $\fcal_{t-1}$, the tuple $(X_t,V_t,S_t,Y_t)$ is drawn from a kernel $\kcal_t(\cdot\mid H_{t-1})$. We impose no structural assumptions on dependence within a run. In particular, the generator may adapt arbitrarily to past verifier outputs. All probability and expectation operators are taken with respect to the resulting joint law of the trajectory.

\paragraph{Offline runs and score windows.} We assume access to a log of labeled historical runs,
\begin{equation*}
  \bcal=\{\run_1,\ldots,\run_n\},
  \qquad
  \run_i \coloneqq
  \bigl((X_{t,i},V_{t,i},S_{t,i},Y_{t,i})\bigr)_{t=1}^{T},
\end{equation*}
drawn i.i.d.\ at the \emph{run} level. The independence assumption applies across runs only. Within each run, the kernels $\kcal_t$ may depend arbitrarily on the past. Deployment consists of an unobserved run $\testrun$ from the same run-level distribution, with hidden labels $Y_{1,\mathrm{test}},\ldots,Y_{T,\mathrm{test}}$. At time $t\in \{1, \ldots, T\}$, the statistical task is to decide using the labelled data whether to accept the current proposal $X_{t,\mathrm{test}}$.

For a fixed context length $K\ge 0$, we restrict our attention to procedures whose round-$t$ decision depends on the most recent $K{+}1$ verifier scores. For each run $i\in \{1, \dots, n\}\cup\{\mathrm{test}\}$, define
\begin{equation*}
  W_{t,i} \coloneqq (S_{t-K,i},\ldots,S_{t,i}) \in \mathcal S^{K+1}.
\end{equation*}
We use the convention $S_{s,i}=0$ for $s\le 0$. When the run index is omitted, quantities refer to the unobserved run $\testrun$. For example, $W_t\equiv W_{t,\mathrm{test}}$ and $X_t\equiv X_{t,\mathrm{test}}$. For each round $t$, we form the augmented labeled sample
\begin{equation*}
  \calW_t \coloneqq
  \bigl\{(W_{t,1},Y_{t,1}),\ldots,(W_{t,n},Y_{t,n}),(W_t,0)\bigr\},
\end{equation*}
where the pseudo-label $0$ is attached to $W_t$ in lieu of its unobserved correctness label. We replace the hidden label with $0$ as we use the convention that the null hypothesis is that the proposal is incorrect. \paragraph{Local evidence via $e$-values.} At round $t$, we ask whether the current score window $W_t$ is more likely to correspond to a correct proposal or an incorrect one. Let $P_{0,t}$ and $P_{1,t}$ denote the marginal laws of $W_t$ conditional on $Y_t=0$ and $Y_t=1$, respectively, and write $\esp_{j,t}$ for expectation under $P_{j,t}$, $j\in\{0,1\}$. The local problem is the simple-versus-simple hypothesis test
\begin{equation*}
  H_{0,t}: W_t \sim P_{0,t}
  \qquad\text{against}\qquad
  H_{1,t}: W_t \sim P_{1,t}.
\end{equation*}
Using the labeled sample $\calW_t$, we construct an $e$-value for $H_{0,t}$. That is, we build a nonnegative statistic $E_t(\calW_t)$ such that $\esp_{0,t}\bigl[E_t(\calW_t)\bigr] \le 1$. Larger values then provide stronger evidence against $H_{0,t}$. If $P_{1,t}$ is absolutely continuous with respect to $P_{0,t}$, denoted by $P_{1,t} \ll P_{0,t}$, the ideal test statistic is the likelihood ratio $L_t = \mathrm{d}P_{1,t}/\mathrm{d}P_{0,t}$. It satisfies $\esp_{0,t}[L_t(W_t)] = 1$, maximizes $\esp_{1,t}[\log L_t(W_t)]$, and is therefore log-optimal among $W_t$-measurable valid $e$-values \citep{shafer2021testing,larsson2024numeraire}. In our setting, however, $L_t$ is unknown and must be learned from $\calW_t$. A direct plug-in estimate $\widehat L_{t,-\mathrm{test}}(W_t)$ trained on $\calW_t\setminus\{(W_{t,\mathrm{test}},0)\}$ is not necessarily valid, since $\esp_{0,t}[\widehat L_{t,-\mathrm{test}}(W_t)]$ can exceed $1$.

Our construction restores validity by wrapping nonnegative, permutation-equivariant scoring rules with index betting in the sense of \citet{koning2023measuring}. The key step is to normalize the unobserved-run score against a batch of offline null windows in a way that preserves exchangeability. We only require the fitting rule to be invariant to the order of its training sample. The resulting vector of bets on the batch of windows is then permutation-equivariant, yielding a statistic that is valid under $H_{0,t}$ for any such fitting rule. We defer the formal construction and power analysis to Section~\ref{sec:marginal}.

\paragraph{Sequential testing.} Once these roundwise $e$-values are available, the problem is reducible to an online multiple-testing task. At round $t$, we return an accepted set $R_t\subseteq\{1,\ldots,t\}$ and measure its false discovery proportion (FDP) by
\begin{equation*}
  \mathrm{FDP}(R_t)
  \coloneqq
  \frac{\sum_{s\in R_t}\one\{Y_s{=}0\}}{|R_t|\vee 1}.
\end{equation*}
For a target acceptance count $\kappa\ge 1$, define $\tau_\kappa\coloneqq\inf\{t:|R_t|\ge\kappa\}\wedge T$. Our goal is trajectory-level FDR control along the unobserved run, uniformly over stopping times up to $\tau_\kappa$:
\begin{equation}\label{eq:trajectory-fdr}
  \esp[\mathrm{FDP}(R_\tau)] \le \alpha
  \qquad \text{for every stopping time $\tau\le\tau_\kappa$}.
\end{equation}
We obtain this guarantee by aggregating the local $e$-values with the weighted online $e$-BH procedure of \citet{fischer2024onlineebh}, an online analogue of $e$-BH \citep{wang2020false}. Under arbitrary dependence across time, online $e$-BH guarantees
\begin{equation*}
  \esp\Big[\sup_{t\ge 1}\mathrm{FDP}(R_t)\Big]\le \alpha,
\end{equation*}
using only the marginal validity condition $\esp_{0,t}[E_t]\le 1$ at each round. Since $\mathrm{FDP}(R_\tau)\le \sup_{t\ge 1}\mathrm{FDP}(R_t)$ for every stopping time $\tau$, this directly implies the bound in Equation~(\ref{eq:trajectory-fdr}).

Online $e$-BH operates through an $\alpha$-spending mechanism, allocating the per-run error budget across rounds according to a nonnegative schedule $(\gamma_t)_{t\ge1}$ with $\sum_{t\ge1}\gamma_t=1$. At time $t$, we accept the proposals indexed by
\begin{equation*}
  R_t
  =
  \left\{s\le t:
    E_s\ge\frac{1}{\alpha\gamma_s k_t}
  \right\},
\end{equation*}
where $k_t$ is the largest $k\in\{1,\ldots,t\}$ for which at least $k$ indices $s\le t$ satisfy $E_s\ge1/(\alpha\gamma_s k)$. If no such $k$ exists, we set $k_t=0$ and $R_t=\emptyset$.

In practice, a fixed $(\gamma_t)_{t\geq 1}$ schedule can be conservative. To overcome this, we calibrate the schedule using the labeled logs. In particular, for $\lambda$ in an ordered set $\Lambda\subset (0,\infty)$ of candidate scales, we scale the schedule by replacing $\gamma_t$ with $\lambda\gamma_t$. The goal is to select the largest possible $\lambda\in \Lambda$ while maintaining the guarantee of Equation~(\ref{eq:trajectory-fdr}).

Using labeled historical runs, we observe the per-run process loss
\begin{equation*}
  \ell(\lambda)
  \coloneqq
  \max_{t\le\tau_\kappa(\lambda)}\mathrm{FDP}(R_t(\lambda)),
\end{equation*}
whose expectation upper bounds the trajectory FDR up to the $\kappa$-acceptance time on the unobserved run. The goal is to select the largest scale in $\Lambda$ whose historical losses still certify $\esp[\ell_{\mathrm{test}}(\lambda)]\le\alpha$, for the unlabeled deployment run. This loss is non-monotone in $\lambda$, however, so the standard monotone CRC route can fail \citep{angelopoulos2024conformal}. Section~\ref{sec:crc} introduces \emph{robust-core} CRC, a finite-sample method designed for exactly this non-monotone setting.

\section{Related Work}\label{sec:related}

\paragraph{Exchangeability-based $e$-values.} Likelihood ratios provide a foundational tool for statistical testing. Nonetheless, when the underlying distributions are unknown, estimating the likelihoods can compromise finite-sample validity. Universal inference \citep{wasserman2020universal} addresses this through data splitting, fitting the likelihood on one split and evaluating its ratio to the null likelihood on an independent split. Within the testing-by-betting framework \citep{shafer2021testing}, exchangeability-based $e$-values offer a different route by using symmetry under the null to normalize a test statistic. In the group-invariance framework of \citet{koning2023measuring}, this yields statistics of the form $T(x)/\mathbb{E}_G[T(Gx)]$, where the denominator averages over null-preserving transformations. Closest to our local construction, the Besag--Clifford $e$-value of \citet{dombowsky2026besag} normalizes a nonnegative statistic $T$ of data $X$ against $M$ reference samples $Y^{(1)},\ldots,Y^{(M)}$ exchangeable with $X$ under the null, namely $(M+1)T(X)/\{T(X)+\sum_{m=1}^{M}T(Y^{(m)})\}$, and is valid for every $M$. With i.i.d. reference samples from the null and $T$ proportional to the likelihood ratio, it converges almost surely to that ratio as $M\to\infty$. Their main interest is a null known only up to a normalizing constant, where reference samples come from MCMC through the parallel method of \citet{besag1989generalized} and Markov chain mixing controls the loss in expected log-growth. Exchangeability also underlies the sequential Monte Carlo tests of \citet{fischer2024sequential}, who accumulate betting wealth as additional reference samples are generated.

Our setting instead introduces a new hypothesis at each generator round. We construct a marginally valid $e$-value for each proposal by normalizing learned scores against a finite pool of logged null windows, then obtain stopping-time control through online multiple testing. With the true likelihood ratio as the score, this statistic is a Besag--Clifford $e$-value with i.i.d. reference samples. For this local construction, Section~\ref{sec:marginal} separates estimation error from the finite-pool normalization penalty relative to the ideal likelihood ratio. We establish matching finite-sample rates for the normalization penalty under fractional-moment assumptions that cover heavy-tailed likelihood ratios, and show that it vanishes under finite KL divergence alone.

\paragraph{Sequential testing and online FDR control.} Sequential hypothesis testing controls the probability of a false rejection even when the sample size is chosen adaptively or if the data arrives sequentially. In particular, $e$-processes leverage test supermartingales to control the probability of a false rejection under the null \citep{ramdas2023game}. E-valuator \citep{sadhuka2025evaluator} applies this perspective to agent verification, using successive verifier scores to flag trajectories that appear likely to end unsuccessfully. Its error criterion is the probability of flagging a trajectory whose terminal output would actually be correct. In this paper, our aim is to control the expected fraction of incorrect proposals accepted when the generator stops. Each round introduces a new proposal and a corresponding hypothesis, making this an online multiple-testing problem. The offline $e$-BH procedure controls FDR under arbitrary dependence among $e$-values \citep{wang2020false}, and \citet{xu2023online} introduce e-LOND with the same dependence robustness for hypotheses arriving sequentially. In our setting, the stopping time depends on the observed results and we leverage the online $e$-BH procedure of \citet{fischer2024onlineebh}, which guarantees the stronger $\esp[\sup_{t\ge1}\mathrm{FDP}(R_t)]\le\alpha$ result.

\paragraph{Risk control under non-monotone losses.} Conformal risk control (CRC) generalizes the coverage guarantees of conformal prediction to bounds on expected loss \citep{angelopoulos2024conformal}. The original procedure requires the loss to be monotone in the calibrated parameter, and its guarantee can fail when this assumption is violated. For monotone losses, the guarantee is tight up to an $O(1/n)$ term. Learn-then-Test handles non-monotone losses by testing which candidate parameters satisfy the risk constraint and controlling the familywise error rate across these tests \citep{angelopoulos2022learn}. More recently, \citet{angelopoulos2026conformal} replaces monotonicity with algorithmic stability and obtains risk guarantees with additional slack from stability and discretization. Whereas \citet{angelopoulos2026conformal}'s theory is based on leave-one-out stability, \citet{prinster2026conformal} also study a different approach to CRC under replace-one stability, along with risk-controlled policy selection in an agentic-loop setting that differs from ours by only using a generator, and no verifier. Our robust-core CRC method restores monotonicity by lifting calibration to sets of parameter values, yielding finite-sample risk control under exchangeability without requiring algorithmic stability, together with an exact algorithm on finite grids.

\section{Roundwise Marginally Valid \texorpdfstring{$e$}{e}-Values}\label{sec:marginal}

We now proceed to construct valid $e$-values for the roundwise testing problems of Section~\ref{sec:overview}. Intuitively, the construction is a betting game played against the null. We embed the unlabeled window $W_t$ among a set of realizations sampled from the null $P_{0,t}$ and ask a strategy to point to its position. Under the null, $W_t$ is exchangeable with the logged null windows, so no strategy can locate it more reliably. Nonetheless, under the alternative, if $W_t$ follows a different law $P_{1,t}$ from the null sample, it is possible to construct a strategy which identifies its position more often.

Concretely, fix $t$ and consider the simple-versus-simple test $H_{0,t}:W_t\sim P_{0,t}$ against $H_{1,t}:W_t\sim P_{1,t}$. Let $\ncal_t\coloneqq\{i\le n:Y_{t,i}=0\}$ index the logged null windows and set $n_{0,t}\coloneqq|\ncal_t|+1$. Draw a uniformly random ordering of the augmented null batch $\{W_t\}\cup\{W_{t,i}:i\in\ncal_t\}$ and write the resulting indexed version as
\begin{equation*}
  \mathbf W_t^0 \coloneqq \bigl(W_t^{(1)},\ldots,W_t^{(n_{0,t})}\bigr),
\end{equation*}
and let $i^\star\in\{1,\ldots,n_{0,t}\}$ denote the coordinate occupied by $W_t$. Conditional on the realized logged labels, the batch size $n_{0,t}$ is fixed. Under $H_{0,t}$ the unobserved window and the logged null windows are exchangeable draws from $P_{0,t}$, so $\mathbf W_t^0\sim P_{0,t}^{\otimes n_{0,t}}$. The random permutation thus renders the hidden position $i^\star$ uniform on $\{1,\ldots,n_{0,t}\}$ and independent of the ordered batch. Under $H_{1,t}$,  the window $W_t$ is the only element not drawn from the null. Its position $i^\star$ is still uniform on $\{1,\ldots,n_{0,t}\}$, with $W_t^{(i^\star)}\sim P_{1,t}$ and the remaining coordinates drawn from $P_{0,t}$.

A \emph{betting strategy} is a measurable map
\begin{equation*}
  \beta_t(\mathbf W_t^0)
  =
  \bigl(\beta_{t,1}(\mathbf W_t^0),\ldots,\beta_{t,n_{0,t}}(\mathbf W_t^0)\bigr)
  \in [0,1]^{n_{0,t}}
\end{equation*}
that is budgeted, so that $\sum_{i=1}^{n_{0,t}}\beta_{t,i}(\mathbf W_t^0)\le 1$. We restrict to \emph{batch-symmetric} strategies, namely those for which permuting the coordinates of $\mathbf W_t^0$ permutes the output coordinates in the same way. The associated \emph{index-betting $e$-value} is the batch-size-scaled stake placed on the hidden unobserved coordinate,
\begin{equation*}
  E_t^{\beta} (\cdot)\coloneqq n_{0,t}\,\beta_{t,i^\star} (\cdot).
\end{equation*}
In words, the bettor spreads one unit of capital across the $n_{0,t}$ candidate positions and is paid at odds $n_{0,t}$ if the hidden $W_t$ position is guessed correctly. It is straightforward to show that $E_t^{\beta}$ is indeed a valid $e$-value.

\begin{proposition}[Validity of index-betting $e$-values]\label{prop:index-valid}
  For any budgeted, batch-symmetric $\beta_t$, $\esp_{0,t}\bigl[E_t^{\beta}\bigr] \le 1$.
\end{proposition}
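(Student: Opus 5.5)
The plan is to condition on everything except the hidden position $i^\star$, and then use the budget constraint. We work conditionally on the realized logged labels, so that $n_{0,t}$ is a fixed integer. As noted just before the statement, under $H_{0,t}$ the ordered batch satisfies $\mathbf W_t^0\sim P_{0,t}^{\otimes n_{0,t}}$. The uniform random ordering makes $i^\star$ uniform on $\{1,\ldots,n_{0,t}\}$ and independent of $\mathbf W_t^0$. I would state this independence carefully: the unordered multiset of windows is i.i.d. under the null, and the position of the test window after a uniform shuffle carries no information about the ordered values. This is the one point that genuinely uses the null hypothesis. Under the alternative, $i^\star$ is still uniform but is no longer independent of the ordered batch, since the coordinate it points to has a different law.

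Given independence, the tower property over $\mathbf W_t^0$ yields
\begin{equation*}
  \esp_{0,t}\bigl[E_t^\beta\bigr]
  =\esp_{0,t}\Bigl[\esp\bigl[n_{0,t}\,\beta_{t,i^\star}(\mathbf W_t^0)\,\big|\,\mathbf W_t^0\bigr]\Bigr]
  =\esp_{0,t}\Bigl[n_{0,t}\sum_{i=1}^{n_{0,t}}\tfrac{1}{n_{0,t}}\,\beta_{t,i}(\mathbf W_t^0)\Bigr]
  =\esp_{0,t}\Bigl[\sum_{i=1}^{n_{0,t}}\beta_{t,i}(\mathbf W_t^0)\Bigr]\le 1 .
\end{equation*}
The final inequality is the budget constraint, applied pointwise. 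Averaging over the logged labels then preserves the bound.

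Batch symmetry is not needed if one randomizes the ordering as described. I would nonetheless remark that it is what makes the bound hold without that external shuffle. If the windows are fed in a fixed order, equivariance gives $\beta_{t,i^\star}$ the same law as $\beta_{t,\pi(i^\star)}$ applied to the permuted batch. Exchangeability of the null batch then yields the same averaging identity. The main obstacle, such as it is, is justifying the independence of $i^\star$ and $\mathbf W_t^0$ under $H_{0,t}$ conditional on the labels. This requires that, given $Y_{t,i}=0$, the logged windows are i.i.d. $P_{0,t}$ and independent of the test window. That follows from the run-level i.i.d. assumption together with the definition of $P_{0,t}$ as the conditional law of $W_t$ given $Y_t=0$.
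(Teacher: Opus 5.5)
Your proposal is correct and follows essentially the same argument as the paper: condition on the logged labels, use that the uniform shuffle makes $i^\star$ uniform and independent of $\mathbf W_t^0$ under $H_{0,t}$, and conclude by the tower property and the budget constraint. Your side remark is also accurate: the paper's proof likewise never uses batch symmetry once the random ordering is in place, and symmetry is what would replace the shuffle if the windows were fed in a fixed order.
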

\begin{proof}
  Condition on the logged labels, and hence on $n_{0,t}$. Under $H_{0,t}$, the ordered batch is exchangeable, and the random ordering makes $i^\star$ independent of $\mathbf W_t^0$ and uniform on $\{1,\ldots,n_{0,t}\}$. Hence
  \begin{equation*}
    \esp_{0,t}[E_t^\beta\mid \mathbf W_t^0]
    =
    n_{0,t}\,\esp_{0,t}[\beta_{t,i^\star}\mid \mathbf W_t^0]
    =
    \sum_{i=1}^{n_{0,t}}\beta_{t,i}(\mathbf W_t^0)
    \le 1.
  \end{equation*}
\end{proof}

The same hidden-index representation supports a power analysis of the $e$-value, measured by the expected log-growth $\esp_{1,t}\!\left[\log E_t^\beta\right]$ under the alternative. We call a strategy \emph{Kelly-optimal} when it maximizes this log-growth among budgeted, batch-symmetric strategies. Under $H_{1,t}$, power is controlled by how well the strategy identifies the coordinate occupied by $W_t$ in the permuted batch. Proposition~\ref{prop:kelly-lr} shows that the posterior of $i^\star$ given $\mathbf W_t^0$ is proportional to the likelihood ratio, and the Kelly-optimal strategy matches this posterior, mirroring classical log-optimal betting results \citep{kelly1956new,larsson2024numeraire}.

\begin{proposition}[Kelly-optimality of likelihood-ratio betting]\label{prop:kelly-lr}
  Assume $P_{1,t}\ll P_{0,t}$ and write $L_t=\mathrm dP_{1,t}/\mathrm dP_{0,t}$. Under the hidden-index alternative indexed by $H_{1,t}$ above, conditional on the indexed batch $\mathbf W_t^0$,
  \begin{equation*}
    \Pr(i^\star=i\mid \mathbf W_t^0)
    =
    \frac{L_t(W_t^{(i)})}{\sum_{j=1}^{n_{0,t}}L_t(W_t^{(j)})},
    \qquad i=1,\ldots,n_{0,t}.
  \end{equation*}
  Consequently, among all budgeted strategies, and hence among budgeted, batch-symmetric strategies,  $\esp\!\bigl[\log(n_{0,t}\beta_{t,i^\star})\mid \mathbf W_t^0\bigr]$ is uniquely maximized by
  \begin{equation*}
    \beta_{t,i}^{\mathrm{LR}}
    \coloneqq
    \frac{L_t(W_t^{(i)})}{\sum_{j=1}^{n_{0,t}} L_t(W_t^{(j)})},
    \qquad i=1,\ldots,n_{0,t}.
  \end{equation*}
\end{proposition}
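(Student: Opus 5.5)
The plan is to compute the posterior of the hidden index by Bayes' rule, using the hidden-index description of $H_{1,t}$, and then to solve a concave maximization over the simplex via Gibbs' inequality.

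\emph{Posterior of $i^\star$.} Condition on the logged labels, so that $n_{0,t}$ is fixed, and let $n=n_{0,t}$. Under $H_{1,t}$ the index $i^\star$ is uniform on $\{1,\ldots,n\}$. Given $i^\star=i$, the batch $\mathbf W_t^0$ has law $P_{1,t}$ on coordinate $i$ and $P_{0,t}$ on every other coordinate, independently. Since $P_{1,t}\ll P_{0,t}$, this conditional law is absolutely continuous with respect to $P_{0,t}^{\otimes n}$, and its density is $w\mapsto L_t(w^{(i)})$. The joint law of $(i^\star,\mathbf W_t^0)$ therefore has density $\frac1n L_t(w^{(i)})$ with respect to counting measure times $P_{0,t}^{\otimes n}$.

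Bayes' formula then gives $\Pr(i^\star=i\mid\mathbf W_t^0)=L_t(W_t^{(i)})/\sum_j L_t(W_t^{(j)})$. This holds on the event where the denominator is positive, and that event has probability one under the mixture, because $\Pr\bigl(L_t(W_t^{(i^\star)})>0\bigr)=\esp_{0,t}[L_t\one\{L_t>0\}]=1$. To be rigorous, I would verify the formula by checking $\esp[\one\{i^\star=i\}g(\mathbf W_t^0)]=\esp[\pi_i(\mathbf W_t^0)g(\mathbf W_t^0)]$ for bounded measurable $g$, where $\pi_i$ denotes the proposed posterior. Writing both sides as integrals against $P_{0,t}^{\otimes n}$, the left side equals $\int \frac1n L_t(w^{(i)})g\,dP_{0,t}^{\otimes n}$. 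The right side equals $\int \pi_i g\,\frac1n\sum_j L_t(w^{(j)})\,dP_{0,t}^{\otimes n}$. These two integrals agree.

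\emph{Optimal strategy.} Write $\pi_i=\pi_i(\mathbf W_t^0)$. For any budgeted $\beta$ that is $\mathbf W_t^0$-measurable,
\[
\esp[\log(n\beta_{i^\star})\mid\mathbf W_t^0]=\log n+\sum_i\pi_i\log\beta_i .
\]
If $\beta_i=0$ for some $i$ with $\pi_i>0$, this value is $-\infty$. Otherwise the budget constraint gives
\[
\sum_i\pi_i\log\beta_i-\sum_i\pi_i\log\pi_i=-\KL(\pi\,\|\,\beta)\le \log\textstyle\sum_{i:\pi_i>0}\beta_i\le 0 .
\]
The first inequality is Jensen, or equivalently Gibbs' inequality applied to the sub-probability vector $\beta$. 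Equality in both steps forces $\beta_i=\pi_i$ on $\{\pi_i>0\}$ and $\sum_{i:\pi_i>0}\beta_i=1$, hence $\beta_i=0$ elsewhere. So $\beta=\pi=\beta^{\mathrm{LR}}$ is the unique maximizer.

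Finally, $\beta^{\mathrm{LR}}$ is clearly batch-symmetric, since permuting the coordinates of the batch permutes the $L_t$ values. It is therefore also the unique maximizer within the restricted class.

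\emph{Main obstacle.} The delicate point is the measure-theoretic Bayes step, in particular handling null sets where $\sum_j L_t(W_t^{(j)})=0$ and making uniqueness precise. Uniqueness should be read as almost sure, up to choices on null events and on coordinates with $\pi_i=0$ where $\beta$ must vanish. I would state this convention explicitly.
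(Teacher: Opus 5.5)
Your proposal is correct and follows essentially the same route as the paper. You compute the posterior of the hidden index by Bayes' rule with a uniform prior, giving weights proportional to $L_t$, and then use Gibbs'/KL nonnegativity to identify $\beta^{\mathrm{LR}}$ as the unique log-optimal bet. The differences are minor: you justify the Bayes step explicitly via densities with respect to $P_{0,t}^{\otimes n_{0,t}}$ and handle sub-probability budgets in a single Jensen step, whereas the paper first argues that a maximizer must be positive on the support of the posterior and exhaust the budget, then applies KL on the simplex.
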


The random ordering of $\mathbf{W}_t^0$ is an analytical device. On the original indices, the Kelly-optimal rule is
\begin{equation*}
  E_t^{\mathrm{idx}}
  \coloneqq
  \bigl(|\ncal_t|+1\bigr)
  \frac{L_t(W_t)}{L_t(W_t)+\sum_{i\in\ncal_t}L_t(W_{t,i})}.
\end{equation*}

In practice, $L_t$ is unknown and must be estimated. For each $i\in\ncal_t\cup\{\mathrm{test}\}$, let $\widehat{L}_{t,-i}$ denote the score produced by an order-invariant leave-one-out fitting rule trained on $\calW_t$ with the pair indexed by $i$ removed. The resulting vector of leave-one-out scores is then permutation-equivariant, which is the batch symmetry required above. For validity, it is enough that the betting function be batch-symmetric and budgeted. We can therefore replace $L_t$ with an estimated likelihood ratio and define the $e$-value
\begin{equation}\label{eq:adaptive-evalue-empirical}
  \widehat{E}_t^{\mathrm{idx}}
  \coloneqq
  \bigl(|\ncal_t|+1\bigr)
  \frac{\widehat{L}_{t,-\mathrm{test}}(W_t)}
  {\widehat{L}_{t,-\mathrm{test}}(W_t)
  +\sum_{i\in\ncal_t}\widehat{L}_{t,-i}(W_{t,i})}.
\end{equation}

\subsection{Power decomposition and asymptotic rates}

The index $e$-value $E_t^{\mathrm{idx}}$ is optimal within the index-betting family, yet may still be suboptimal relative to the likelihood ratio $L_t(W_t)$. We adopt it because $L_t$ is unknown in practice, and the index-betting construction turns empirical proxies into valid $e$-values as in Equation~(\ref{eq:adaptive-evalue-empirical}). To characterize the log-optimality gap between the empirical index $e$-value $\widehat E_t^{\mathrm{idx}}$ and the likelihood ratio, we decompose it into a betting term and an estimation term,
\begin{equation}\label{eq:estimation-decomposition}
  \esp_{1,t}\!\left[\log L_t(W_t)-\log \widehat E_t^{\mathrm{idx}}\right]
  =
  \underbrace{\esp_{1,t}\!\left[\log L_t(W_t)-\log E_t^{\mathrm{idx}}\right]}_{\text{betting cost}}
  +
  \underbrace{\esp_{1,t}\!\left[\log E_t^{\mathrm{idx}}-\log \widehat E_t^{\mathrm{idx}}\right]}_{\text{estimation residual}}.
\end{equation}
The \emph{betting cost} is the fundamental price of the index-betting construction, the shortfall from $L_t(W_t)$ to the best index-betting $e$-value $E_t^{\mathrm{idx}}$, paid even when $L_t$ is known exactly. The \emph{estimation residual} is the price of ignorance, the further gap from $E_t^{\mathrm{idx}}$ to $\widehat E_t^{\mathrm{idx}}$. The theorem below controls it by the worst log-ratio error on the augmented batch.

\begin{theorem}[Estimation residual]\label{thm:power-decomp}
  Assume $L_t>0$ $P_{0,t}$-a.s. and that $\widehat{L}_{t,-i}(W_{t,i})>0$ for every $i\in \ncal_t\cup\{\mathrm{test}\}$ a.s. Then
  \begin{equation*}
    \left|\log E_t^{\mathrm{idx}}-\log \widehat{E}_t^{\mathrm{idx}}\right|
    \le
    2\max_{i\in \ncal_t\cup\{\mathrm{test}\}}
    \left|\log L_t(W_{t,i})-\log \widehat{L}_{t,-i}(W_{t,i})\right|.
  \end{equation*}
  Consequently, if that worst-case error, $\max_{i\in \ncal_t\cup\{\mathrm{test}\}}\lvert\log L_t(W_{t,i})-\log \widehat{L}_{t,-i}(W_{t,i})\rvert$, has finite expectation under the alternative, then
  \begin{equation*}
    \left|
    \esp_{1,t}\left[\log E_t^{\mathrm{idx}}
    -\log \widehat{E}_t^{\mathrm{idx}}\right]
    \right|
    \le
    2\esp_{1,t}\left[
      \max_{i\in \ncal_t\cup\{\mathrm{test}\}}
      \left|\log L_t(W_{t,i})-\log \widehat{L}_{t,-i}(W_{t,i})\right|
    \right].
  \end{equation*}
\end{theorem}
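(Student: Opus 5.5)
The plan is to write both $e$-values in log form and compare them term by term. Write $m=|\ncal_t|+1$. For $i\in\ncal_t\cup\{\mathrm{test}\}$, set $a_i\coloneqq \log L_t(W_{t,i})$ and $b_i\coloneqq\log\widehat L_{t,-i}(W_{t,i})$, with the convention $W_{t,\mathrm{test}}=W_t$. Let $\delta\coloneqq\max_i|a_i-b_i|$. By the positivity assumptions, both $e$-values are strictly positive and their logarithms are finite almost surely. Taking logarithms in the definition of $E_t^{\mathrm{idx}}$ and in Equation~(\ref{eq:adaptive-evalue-empirical}) gives
\begin{equation*}
  \log E_t^{\mathrm{idx}}=\log m + a_{\mathrm{test}}-\log\sum_{i}e^{a_i},
  \qquad
  \log \widehat E_t^{\mathrm{idx}}=\log m + b_{\mathrm{test}}-\log\sum_{i}e^{b_i}.
\end{equation*}
Here the sums run over $\ncal_t\cup\{\mathrm{test}\}$. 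One subtlety needs checking: positivity of $L_t$ is assumed $P_{0,t}$-a.s., while the alternative law matters under $\esp_{1,t}$. Since $P_{1,t}\ll P_{0,t}$ (implicit in defining $L_t$), any $P_{0,t}$-null set of windows is also $P_{1,t}$-null. So $L_t(W_{t,i})>0$ almost surely for every coordinate, whether that coordinate is the alternative test window or one of the logged null windows.

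The difference then splits into two pieces,
\begin{equation*}
  \log E_t^{\mathrm{idx}}-\log \widehat E_t^{\mathrm{idx}}
  =(a_{\mathrm{test}}-b_{\mathrm{test}})-\Bigl(\log\sum_i e^{a_i}-\log\sum_i e^{b_i}\Bigr).
\end{equation*}
The first piece is bounded in absolute value by $\delta$ by definition. For the second piece I use the fact that log-sum-exp is $1$-Lipschitz with respect to the sup norm. Indeed, $e^{a_i}\le e^{\delta}e^{b_i}$ for every $i$, so $\sum_i e^{a_i}\le e^{\delta}\sum_i e^{b_i}$. By symmetry $\sum_i e^{b_i}\le e^{\delta}\sum_i e^{a_i}$. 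Taking logarithms shows the second piece is also at most $\delta$ in absolute value. The triangle inequality then gives the pathwise bound $2\delta$.

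For the expectation statement, the pathwise bound shows that $|\log E_t^{\mathrm{idx}}-\log\widehat E_t^{\mathrm{idx}}|$ is dominated by $2\delta$. Since $\esp_{1,t}[\delta]<\infty$ by hypothesis, the difference is integrable and its expectation is well defined. This holds even if $\log E_t^{\mathrm{idx}}$ and $\log \widehat E_t^{\mathrm{idx}}$ are not separately integrable, because we take the expectation of the difference directly. Then $|\esp_{1,t}[X]|\le\esp_{1,t}|X|\le 2\esp_{1,t}[\delta]$.

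The only step needing care is this integrability point, together with the null-set issue handled above. The algebra itself is elementary. Reading $\esp_{1,t}$ as the joint law of the batch under the hidden-index alternative, rather than the law of $W_t$ alone, is how I would interpret the expectation here.
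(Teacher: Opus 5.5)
Your proof is correct and follows the paper's argument. You write both log $e$-values explicitly, split off the test-coordinate term, bound the log-sum difference via $e^{-\delta}\sum_i \widehat L_{t,-i}(W_{t,i})\le\sum_i L_t(W_{t,i})\le e^{\delta}\sum_i \widehat L_{t,-i}(W_{t,i})$, and conclude with the triangle inequality. Your remarks on transferring positivity via $P_{1,t}\ll P_{0,t}$, and on integrability of the difference in the expectation bound, are left implicit in the paper and are handled correctly.
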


Thus, convergence of the expected worst-case log-ratio error implies convergence of the estimation residual. The remaining question is how fast the betting cost vanishes with the batch size $n_{0,t}$. We show that a bounded fractional moment of $L_t$ under $P_{0,t}$ is enough for the betting cost to vanish in $n_{0,t}$, at a rate controlled by the moment exponent.

\begin{theorem}[Betting cost upper bound]\label{thm:trunc-power-gap}
  Condition on the realized logged labels, so that $n_{0,t}$ is fixed. Suppose
  $\esp_{P_{0,t}}[L_t^{1+\delta}]<\infty$ for some $\delta\in(0,1]$, and set
  \begin{equation*}
    C_\delta\coloneqq\sup_{x>0}\frac{\log(1+x)}{x^\delta},
  \end{equation*}
  which is finite for every $\delta\in(0,1]$. Then
  \begin{equation*}
    \esp_{1,t}\!\left[\log L_t(W_t)-\log E_t^{\mathrm{idx}}\right]
    \le
    \frac{C_\delta\,\esp_{P_{0,t}}\!\left[L_t\,(L_t-1)_+^\delta\right]}{n_{0,t}^\delta}
    \le
    \frac{C_\delta\,\esp_{P_{0,t}}\!\left[L_t^{1+\delta}\right]}{n_{0,t}^\delta}.
  \end{equation*}
\end{theorem}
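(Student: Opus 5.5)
The plan is to compute the betting cost in closed form and then apply two elementary inequalities: Jensen's inequality over the reference batch, and the definition of $C_\delta$. Write $m=|\ncal_t|=n_{0,t}-1$ and $S\coloneqq\sum_{i\in\ncal_t}L_t(W_{t,i})$. From the definition of $E_t^{\mathrm{idx}}$, the likelihood ratio $L_t(W_t)$ cancels:
\begin{equation*}
  \log L_t(W_t)-\log E_t^{\mathrm{idx}}
  =
  \log\frac{L_t(W_t)+S}{n_{0,t}} .
\end{equation*}
Under $H_{1,t}$, conditional on the realized labels, $W_t\sim P_{1,t}$ is independent of the logged null windows, which are i.i.d.\ from $P_{0,t}$. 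Each logged term therefore satisfies $\esp_{P_{0,t}}[L_t]=1$, since $P_{1,t}\ll P_{0,t}$.

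\textbf{Step 1: average over the reference batch.} I condition on $W_t$ and use concavity of the logarithm (Jensen) over the $m$ null windows:
\begin{equation*}
  \esp\!\left[\log\frac{L_t(W_t)+S}{n_{0,t}}\,\Big|\,W_t\right]
  \le
  \log\frac{L_t(W_t)+m}{n_{0,t}}
  =
  \log\!\Bigl(1+\frac{L_t(W_t)-1}{n_{0,t}}\Bigr).
\end{equation*}
This reduces the problem to a function of the single window $W_t$.

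\textbf{Step 2: drop the favourable part and apply $C_\delta$.} When $L_t(W_t)\le1$ the logarithm above is nonpositive, so in all cases it is at most $\log\bigl(1+(L_t(W_t)-1)_+/n_{0,t}\bigr)$. By the definition of $C_\delta$ applied with $x=(L_t-1)_+/n_{0,t}$ (the case $x=0$ is trivial), this is at most
\begin{equation*}
  C_\delta\,(L_t(W_t)-1)_+^\delta\,n_{0,t}^{-\delta}.
\end{equation*}
Finiteness of $C_\delta$ for $\delta\in(0,1]$ follows because $\log(1+x)\le x$ near $0$ and $\log(1+x)=o(x^\delta)$ at infinity.

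\textbf{Step 3: change measure.} Taking $\esp_{1,t}$ and using $\esp_{1,t}[f(W_t)]=\esp_{P_{0,t}}[L_t f]$ for nonnegative $f$ yields the first bound, $C_\delta\,\esp_{P_{0,t}}[L_t(L_t-1)_+^\delta]/n_{0,t}^\delta$. The second bound follows from $(L_t-1)_+\le L_t$.

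\textbf{Main obstacle.} The only delicate point is that all expectations are well defined, since $\log\bigl((L_t+S)/n_{0,t}\bigr)$ could a priori have an infinite negative part. I would handle this by applying the tower property only to the positive bound obtained in Step 1, or by reading the left-hand side as an extended-real expectation. That expectation is well defined because its positive part is integrable under the moment assumption $\esp_{P_{0,t}}[L_t^{1+\delta}]<\infty$, and an inequality with $-\infty$ on the left is trivially true.
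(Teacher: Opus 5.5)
Your proof is correct and follows the paper's argument step for step. Like the paper, you reduce the betting cost to $\log\bigl((L_t(W_t)+S)/n_{0,t}\bigr)$, apply conditional Jensen over the null batch using $\esp_{P_{0,t}}[L_t]=1$, pass to $(L_t-1)_+$, invoke $\log(1+x)\le C_\delta x^\delta$, and change measure; your integrability caveat is harmless but unnecessary, since $\log\bigl((L_t(W_t)+S)/n_{0,t}\bigr)\ge \log L_t(W_t)-\log n_{0,t}$ and $\esp_{1,t}[\log^{-}L_t(W_t)]=\esp_{P_{0,t}}[L_t\log^{-}L_t]\le e^{-1}$, so the negative part is automatically integrable.
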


Since $C_1=1$, setting $\delta=1$ recovers the classical $O(1/n_{0,t})$ rate, and $\delta<1$ extends the bound into the heavy-tailed regime where $\esp_{P_{0,t}}[L_t^2]=\infty$. For every $\delta\in(0,1)$, this polynomial rate is tight over the moment class $ \mathcal P_{1+\delta}(B)  =\{(P_{0,t},P_{1,t}):\esp_{P_{0,t}}[L_t^{1+\delta}]\le B\}$.

\begin{theorem}[Matching lower bound]\label{thm:lower-bound}
  For every $\delta\in(0,1)$ and $B > 1$, there exists $c = c(\delta, B)>0$ such that for all $n_{0,t}$ sufficiently large some pair $(P_{0,t},P_{1,t}) \in \mathcal{P}_{1+\delta}(B)$ satisfies
  \begin{equation*}
    \esp_{1,t}\!\Big[\log L_t(W_t) - \log E_t^{\mathrm{idx}}\Big]
    \ge
    c\, n_{0,t}^{-\delta}.
  \end{equation*}
  Since $E_t^{\mathrm{idx}}$ is Kelly-optimal by Proposition~\ref{prop:kelly-lr}, this worst-case rate also lower bounds every budgeted batch-symmetric strategy.
\end{theorem}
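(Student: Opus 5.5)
The plan is to prove Theorem~\ref{thm:lower-bound} with an explicit two-point construction that depends on $n \coloneqq n_{0,t}$. Write $m \coloneqq n-1$ for the number of logged null windows and $S \coloneqq \sum_{i\in\ncal_t} L_t(W_{t,i})$. Substituting the definition of $E_t^{\mathrm{idx}}$ gives the exact identity
\begin{equation*}
  \log L_t(W_t)-\log E_t^{\mathrm{idx}} = \log\frac{L_t(W_t)+S}{n},
\end{equation*}
where $W_t\sim P_{1,t}$ and, independently, $S$ is a sum of $m$ i.i.d.\ copies of $L_t$ under $P_{0,t}$. The task is therefore to find a pair in $\mathcal P_{1+\delta}(B)$ for which $\esp[\log((L_t(W_t)+S)/n)]\gtrsim n^{-\delta}$.

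\textbf{Construction.} Take a two-point space $\{x_a,x_b\}$. Fix a constant $K$ to be chosen later, and set $a\coloneqq Kn$, $c\coloneqq B-1>0$, and $q\coloneqq c\,a^{-\delta}$. Define the alternative by $P_{1,t}(x_a)=q$, and the null by $P_{0,t}(x_a)=p\coloneqq q/a$. Then $L_t(x_a)=a$ and $L_t(x_b)=b\coloneqq(1-q)/(1-p)<1$. The moment condition is immediate:
\begin{equation*}
  \esp_{P_{0,t}}[L_t^{1+\delta}] = p\,a^{1+\delta}+(1-p)b^{1+\delta}\le q\,a^\delta+1 = B .
\end{equation*}
For large $n$, both $q=c(Kn)^{-\delta}$ and $mp\approx q/K$ tend to zero. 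Here $mp$ is the expected number of null hits at $x_a$.

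\textbf{Lower-bounding the cost.} Let $H\sim\mathrm{Bin}(m,p)$ count the null windows at $x_a$, so that $S=aH+b(m-H)$. I split according to the value of $W_t$.
\begin{itemize}[leftmargin=*]
  \item On $\{W_t=x_a\}$, which has probability $q$, use $S\ge b(m-H)$ and $b\ge 1-q$. The log term is at least $\log\bigl(1+K-O(q)-O(H/n)\bigr)$. After averaging over $H$ (using $\esp H = O(q/K)$), this contributes $q\log(1+K)-O(q^2)$.
  \item On $\{W_t=x_b\}$, which has probability $1-q$, bound $(b+S)/n\ge b\,(n-H)/n$. On $\{H=0\}$ this gives $\log b\ge -q-O(q^2)$.
  \item On $\{H\ge 1\}$ inside the second case, which has probability at most $mp\le q/K$, the log is actually positive, since $a/n=K$. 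So this event only helps, and it is safe to bound its contribution below by $0$ or by $\log b$.
\end{itemize}
Collecting the pieces yields
\begin{equation*}
  \esp_{1,t}\bigl[\log L_t(W_t)-\log E_t^{\mathrm{idx}}\bigr]\ \ge\ q\bigl(\log(1+K)-1\bigr)-O(q^2)-O(1/n).
\end{equation*}
Choosing $K=e^2$ makes the bracket equal to at least $1$. Then $q=c\,e^{-2\delta}n^{-\delta}$. Because $\delta<1$, both remainder terms $O(q^2)=O(n^{-2\delta})$ and $O(1/n)$ are $o(n^{-\delta})$. Hence for $n$ large enough the cost exceeds $\tfrac12 c\,e^{-2\delta}n^{-\delta}$, which gives the constant $c(\delta,B)$ in the statement.

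\textbf{Main obstacle.} The delicate step is the bookkeeping in the second case. The loss $\log b\approx -q$ is of the same order as the gain $q\log(1+K)$. The argument therefore needs $K$ large enough that the gain dominates, while the $-1/n$-type corrections, coming from $m=n-1$ and from $H$, must be shown to be of strictly lower order. This is exactly where $\delta<1$ is used.

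The final sentence of the theorem then follows directly from Proposition~\ref{prop:kelly-lr}. Since $E_t^{\mathrm{idx}}$ maximizes $\esp_{1,t}\bigl[\log(n\beta_{i^\star})\mid \mathbf W_t^0\bigr]$ over all budgeted strategies, every budgeted batch-symmetric strategy has a betting cost at least as large as that of $E_t^{\mathrm{idx}}$ on this same pair.
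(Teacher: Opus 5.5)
Your proof is correct and uses the same two-point spike construction as the paper: the alternative puts mass of order $n_{0,t}^{-\delta}$ on an atom where $L_t$ is of order the batch size, and this gain is traded against the $\log b\approx -q$ loss on the bulk. The only difference is the tuning. The paper places the spike at $L$ equal to the number of logged null windows, and must count both single-hit events (alternative hit and null hit) to obtain the constant $2\log 2-1>0$. You place it at $e^2 n_{0,t}$, so the alternative hit alone gives $\log(1+e^2)-1>1$, the null hits can be discarded as nonnegative, and $\esp_{P_{0,t}}[L_t^{1+\delta}]<B$ holds exactly for every $n_{0,t}$ rather than only asymptotically.
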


Finally, finite KL divergence alone is enough for the betting cost to vanish asymptotically.

\begin{corollary}[KL-only convergence]\label{cor:kl-dominance}
  If $\KL(P_{1,t}\|P_{0,t})<\infty$, then $\esp_{1,t}[\log L_t(W_t)-\log E_t^{\mathrm{idx}}]\to 0$ as $n_{0,t}\to\infty$.
\end{corollary}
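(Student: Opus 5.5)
We first write the betting cost explicitly. Let $n=n_{0,t}$, let $W\sim P_{1,t}$ denote the unobserved window, and let $Z_1,\ldots,Z_{n-1}$ be the logged null windows, i.i.d.\ from $P_{0,t}$ and independent of $W$. Set $L=L_t$ and $\bar S_{n-1}\coloneqq\frac{1}{n-1}\sum_{i=1}^{n-1}L(Z_i)$. Then
\begin{equation*}
  \log L(W)-\log E_t^{\mathrm{idx}}
  =\log\frac{L(W)+(n-1)\bar S_{n-1}}{n}
  =\log\Bigl(\frac{L(W)}{n}+\frac{n-1}{n}\bar S_{n-1}\Bigr).
\end{equation*}
Since $\esp_{P_{0,t}}[L]=1$, we have $\bar S_{n-1}\to 1$ almost surely by the strong law of large numbers. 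For fixed $W$, the random variable inside the log therefore converges almost surely to $1$, because $L(W)/n\to0$. The task is to justify exchanging the limit with $\esp_{1,t}$. We also need a bound of order $n^{-\delta}$ that only approaches zero, without any rate. This is why the argument must use only $\KL<\infty$, i.e.\ $\esp_{1,t}[\log L(W)]<\infty$, which is equivalent to $\esp_{P_{0,t}}[L\log_+ L]<\infty$.

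\textbf{Upper bound.} By concavity of $\log$ and Jensen's inequality applied conditionally on $W$, averaging over the null samples gives
\begin{equation*}
  \esp\bigl[\log(L(W)/n+\tfrac{n-1}{n}\bar S_{n-1})\mid W\bigr]\le \log\bigl(1+L(W)/n\bigr).
\end{equation*}
Hence the betting cost is at most $\esp_{1,t}[\log(1+L(W)/n)]$. This integrand decreases to $0$ pointwise and is dominated by $\log(1+L(W))\le \log 2+\log_+L(W)$. The dominating function is integrable under $P_{1,t}$ because $\esp_{1,t}[\log_+ L]\le \KL+\esp_{1,t}[\log_- L]$, and $\esp_{1,t}[\log_-L]=\esp_{0}[L\log_-(1/L)\one\{L<1\}]\le e^{-1}$. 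Dominated (or monotone) convergence then sends the upper bound to $0$.

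\textbf{Lower bound.} The cost is in fact nonnegative in expectation by Kelly/Gibbs reasoning, but we only need $\liminf\ge 0$. We use $\log(L(W)/n+\tfrac{n-1}{n}\bar S_{n-1})\ge \log(\tfrac{n-1}{n}\bar S_{n-1})$. Since $W$ is independent of the null samples, it suffices to show $\esp[\log\bar S_{n-1}]\to 0$. Jensen's inequality gives $\esp[\log \bar S_{n-1}]\le 0$. For the reverse direction, almost-sure convergence gives $\log\bar S_{n-1}\to0$, so we need uniform integrability of the negative part $\log_-\bar S_{n-1}$.

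This step is the main obstacle, since $\log$ is unbounded below near $0$. I would first try the following argument. Because $P_{1,t}\ll P_{0,t}$ and $L>0$ on a set of positive $P_0$ mass, pick $\eta>0$ with $p\coloneqq P_0(L>\eta)>0$. Then $\bar S_{n-1}\ge \eta N/(n-1)$, where $N\sim\mathrm{Bin}(n-1,p)$. On the event $N\ge p(n-1)/2$, $\log\bar S_{n-1}$ is bounded below by a constant. The complementary event has exponentially small probability, but on it $\log\bar S_{n-1}$ can still be $-\infty$ if $N=0$.

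I would sidestep this. Alternatively, one can simply note the exact identity: the cost equals $\esp[\log \frac{L(W)+\sum L(Z_i)}{n}]$, which is $\ge 0$ by Gibbs' inequality, since $E_t^{\mathrm{idx}}$ is a valid $e$-value and $L$ is log-optimal among $e$-values, as stated in Section~\ref{sec:overview}. In fact, conditioning on the batch, the gap is a KL divergence between the posterior of $i^\star$ from Proposition~\ref{prop:kelly-lr} and the uniform distribution, hence nonnegative. The nonnegative cost is then squeezed between $0$ and $\esp_{1,t}[\log(1+L(W)/n)]\to0$, which proves the corollary.
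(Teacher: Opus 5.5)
Your argument is correct in substance but takes a different route from the paper. Write $A$ for the average of $L_t$ over the augmented batch $\{W_t\}\cup\{W_{t,i}:i\in\ncal_t\}$, so the gap is exactly $\log A$, and let $\phi(x)=x\log x$.

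\textbf{The paper's route.} It uses change of measure and exchangeability to get the identity $\esp_{1,t}[\log A]=\esp[\phi(A)]$, with the right-hand expectation taken under $P_{0,t}^{\otimes n_{0,t}}$. It then concludes from $A\to1$ almost surely together with uniform integrability of $\phi(A)$. Uniform integrability comes from the Jensen bound ($\phi_+(A)$ is at most the empirical average of $\phi_+(L_t)$) and from $\phi\ge -e^{-1}$.

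\textbf{Your route.} You squeeze the cost between two bounds:
\begin{itemize}
\item The conditional Jensen step, which is the first step of the proof of Theorem~\ref{thm:trunc-power-gap}, bounds the cost by $\esp_{1,t}[\log(1+L_t(W_t)/n_{0,t})]$.
\item Dominated convergence sends this bound to zero using only $\esp_{1,t}[\log_+L_t]<\infty$.
\item Nonnegativity of the cost closes the argument.
\end{itemize}

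\textbf{What each buys.} Your version avoids the strong law and uniform integrability. It also shows that the corollary reuses the intermediate bound in the proof of Theorem~\ref{thm:trunc-power-gap}, with dominated convergence replacing $\log(1+x)\le C_\delta x^\delta$. The paper's identity instead gives the exact value of the cost. Nonnegativity, via $\esp[\phi(A)]\ge\phi(\esp A)=0$, and the limit then both follow from it.

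\textbf{Repairs needed for nonnegativity.}
\begin{itemize}
\item Your claim that, conditional on the batch, the gap equals the KL divergence of the posterior of $i^\star$ from uniform is false. That KL divergence is $\esp[\log E_t^{\mathrm{idx}}\mid\mathbf W_t^0]$, the conditional expected log-wealth. The gap itself is $\log A$, which is batch-measurable and negative whenever the batch average of $L_t$ is below one.
\item The overview's log-optimality statement does not apply directly. It concerns $W_t$-measurable $e$-values, and $E_t^{\mathrm{idx}}$ depends on the reference batch.
\end{itemize}
Gibbs' inequality does work on the product space, where the joint likelihood ratio is $L_t(W_t)$. There, $\esp_{1,t}[\log(E_t^{\mathrm{idx}}/L_t(W_t))]\le\log\esp_{0,t}[E_t^{\mathrm{idx}}\one\{L_t(W_t)>0\}]\le 0$ by Proposition~\ref{prop:index-valid}. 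Alternatively, you can use the paper's identity together with convexity of $\phi$. A minor slip: $L\log_-(1/L)$ should read $L\log(1/L)$.
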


\subsection{Examples and simulations}\label{subsec:sec4-examples-simulations}

We instantiate the local testing problem on two one-dimensional families chosen to separate light- and heavy-tailed likelihood ratios. The light-tailed example sets $P=\mathcal N(0,1)$ against $Q=\mathcal N(2.5,1)$. The heavy-tailed example tests $P=\mathrm{Pareto}(4)$ against $Q=\mathrm{Pareto}(1)$ on $x\ge1$, where $\mathrm{Pareto}(a)$ has density $f_a(x)=a x^{-(a+1)}$. We first illustrate the likelihood-ratio behavior in these two settings.

\begin{example}[Gaussian shift]
  A Gaussian shift sits in the light-tailed regime. If $W_t\sim\mathcal N(0,\Sigma)$ under $P_{0,t}$ and $W_t\sim\mathcal N(\mu,\Sigma)$ under $P_{1,t}$, the log-likelihood ratio $\log L_t(W_t)$ is Gaussian with mean $-\rho^2/2$ and variance $\rho^2$, where $\rho^2=\mu^\top\Sigma^{-1}\mu$. Hence $\esp_{P_{0,t}}[L_t^k]=e^{k(k-1)\rho^2/2}$ for every $k$, so all moments are finite. Theorem~\ref{thm:trunc-power-gap} with $\delta=1$ therefore recovers a $1/n_{0,t}$ rate.
\end{example}

\begin{example}[Pareto pair] Let $f_0(x)=ax^{-(a+1)}$ and $f_1(x)=bx^{-(b+1)}$ on $x\ge 1$ with $a>b>0$, so $L_t(x)=(b/a)x^{a-b}$. Here $\esp_{P_{0,t}}[L_t^2]<\infty$ only when $a<2b$, whereas $\esp_{P_{0,t}}[L_t^{1+\delta}]<\infty$ iff $\delta<b/(a-b)$, which holds for some $\delta>0$ for $a>b$.
\end{example}

\begin{figure}[htbp]
  \centering
  \includegraphics[width=\linewidth]{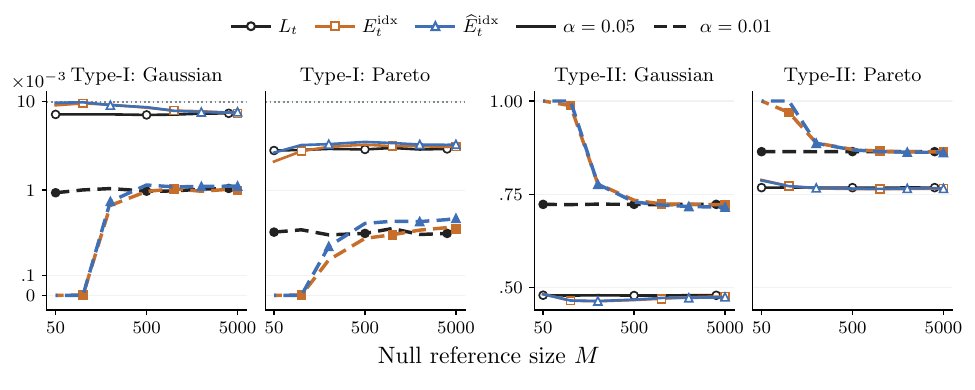}
  \caption{Type-I and Type-II error rates across reference sizes $M$ for $L_t$, $E_t^{\mathrm{idx}}$, and $\widehat E_t^{\mathrm{idx}}$. Solid curves with open markers correspond to $\alpha=0.05$ and dashed curves with filled markers to $\alpha=0.01$.}
  \label{fig:sec4-error-curves}
\end{figure}

For each family, we sweep the reference set size $M\in\{50,100,200,500,1000,2000,4000,5000\}$ and report results at nominal levels $\alpha\in\{0.05,0.01\}$, where $M=n_{0,t}-1$. For the Gaussian experiments, we use $P_{0,t}=\mathcal{N}(0,1)$ and $P_{1,t}=\mathcal{N}(2.5,1)$, while for the Pareto experiments we use $P_{0,t}=\mathrm{Pareto}(4)$ and $P_{1,t}=\mathrm{Pareto}(1)$. To estimate the likelihood ratio, we fit standardized logistic regression on 100 null and 100 alternative observations, using feature $z$ for the Gaussian family and $\log z$ for the Pareto family. The fitted odds are clipped to $[10^{-3},10^3]$. Figure~\ref{fig:sec4-error-curves} reports the Type-I and Type-II error rates of three $e$-values, namely the true likelihood ratio $L_t$, index betting with the true likelihood ratio $E_t^{\mathrm{idx}}$, and index betting with the estimated likelihood ratio $\widehat{E}_t^{\mathrm{idx}}$.

In the second experiment, we aim to characterize how the tail of the likelihood ratio controls the finite-reference penalty. To remove estimator error, Figure~\ref{fig:sec4-delta-sweep} compares only the likelihood ratio with $E_t^{\mathrm{idx}}$. The sweep is organized by the largest finite fractional moment
\begin{equation*}
  \delta
  \coloneqq
  \sup\{\gamma>0:\esp_{P_{0,t}}[L_t^{1+\gamma}]<\infty\}.
\end{equation*}
We sweep $\delta$ over $\{0.025,\ldots,1\}$ by choosing a null--alternative pair whose likelihood ratio has exactly that fractional-moment exponent. In the Gaussian case, we use $P=\mathcal N(0,1)$ against $Q=\mathcal N(0,1+1/\delta)$, while in the Pareto case we use $P=\mathrm{Pareto}(4)$ against $Q=\mathrm{Pareto}(b)$ with $b=4\delta/(1+\delta)$. Power is reported on a normalized log-growth scale, $\esp_Q[\log E]/\KL(Q\|P)$. Since $\esp_Q[\log L_t]=\KL(Q\|P)$, the likelihood ratio sits at exactly one, and a curve at $0.7$ means the index e-value retains $70\%$ of the ideal log-growth. The resulting curves follow both directions predicted by Theorem~\ref{thm:trunc-power-gap}. The gap closes as $\delta$ grows and as the reference batch $M$ grows.
\begin{figure}[htbp]
  \centering
  \includegraphics[width=\linewidth]{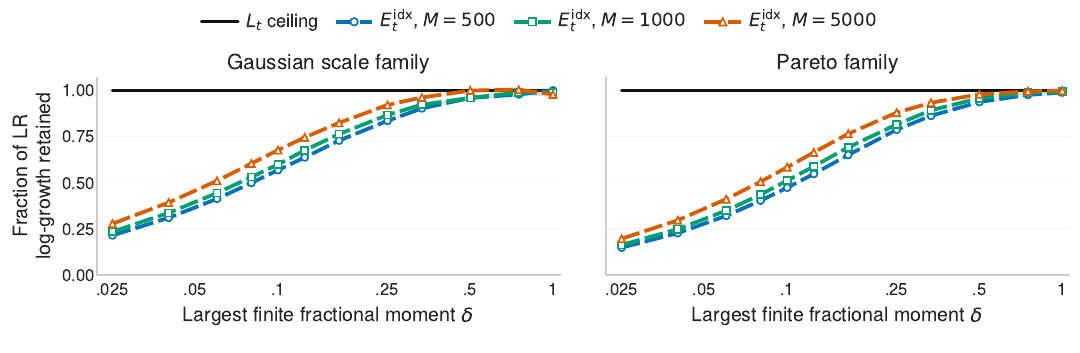}
  \caption{Normalized log-growth $\esp_Q[\log E]/\KL(Q\|P)$ of the oracle
    index e-value $E_t^{\mathrm{idx}}$ as a function of the
    fractional-moment exponent $\delta$, for reference sizes
    $M\in\{500,1000,5000\}$. Each point averages $2\times 10^5$ repetitions.}
  \label{fig:sec4-delta-sweep}
\end{figure}

\section{Robust-Core Risk Control for Non-Monotone Losses}\label{sec:crc} Using exchangeable labeled samples, CRC \citep{angelopoulos2024conformal} calibrates a loss parameter so that the test-time loss has controlled expectation. Let $(\Lambda,\preceq)$ be a totally ordered parameter space, where $\lambda\preceq\lambda'$ means that $\lambda'$ is preferred by the downstream selection criterion. For the online $e$-BH family introduced in Section~\ref{sec:overview}, larger values of $\lambda$ stop the loop earlier, save computational costs, and are therefore preferred. Throughout this section we treat CRC abstractly, then specialize it for our setting in Section~\ref{subsec:robust-core-calibration}.

We assume that $\ell_1,\ldots,\ell_{n+1}$ are exchangeable random loss profiles, where each $\ell_i:\Lambda\to[0,1]$ is a measurable function of $\lambda$ and auxiliary randomness that typically represents the underlying data, which we leave implicit to keep the notation light. In our setting, only $\ell_1,\ldots,\ell_n$ are observed and $\ell_{n+1}$ is hidden.
For $A\subseteq\Lambda$, write $\inf_{\preceq}A$, $\sup_{\preceq}A$, and $\max_{\preceq}A$ for the infimum, supremum, and maximum with respect to $\preceq$. Whenever a rule below selects a supremum, we assume that it lies in the feasible set. Given a level $\alpha\in(0,1)$, a CRC method chooses an $(\ell_1,\ldots,\ell_n)$-dependent $\widehat\lambda\in\Lambda$, which satisfies
\begin{equation}\label{eq:crc-risk-target}
  \esp[\ell_{n+1}(\widehat\lambda)] \le \alpha.
\end{equation}

When each loss profile is non-decreasing along $\preceq$, i.e., $\lambda\preceq\lambda'$ implies $\ell_i(\lambda)\le\ell_i(\lambda')$ for every $i$, the classical CRC approach computes
\begin{equation}\label{eq:crc-monotone}
  \widehat\lambda_{\mathrm{CRC}}
  \coloneqq
  \sup_{\preceq}\!\left\{\lambda \in \Lambda \,\Big|\,
  \frac{1}{n+1}\!\left(\sum_{i=1}^{n}\ell_i(\lambda) + 1\right) \le \alpha\right\},
\end{equation}
which satisfies the guarantee in Equation~(\ref{eq:crc-risk-target}) \citep{angelopoulos2024conformal}. When monotonicity fails, the above rule can be invalid \citep{angelopoulos2024conformal}. A conservative alternative replaces each $\ell_i$ by its running supremum $\bar\ell_i(\lambda)\coloneqq\sup_{\lambda'\preceq\lambda}\ell_i(\lambda')$ and applies the classical rule to these envelopes,
\begin{equation}\label{eq:envelope-crc}
  \widehat\lambda_{\mathrm{env}}
  \coloneqq
  \sup_{\preceq}\!\left\{\lambda \in \Lambda \,\Big|\,
  \frac{1}{n+1}\!\left(\sum_{i=1}^{n}\bar\ell_i(\lambda) + 1\right) \le \alpha\right\}.
\end{equation}
The envelopes are non-decreasing and dominate the original curves, $\ell_i(\lambda)\le\bar\ell_i(\lambda)$, so the monotone guarantee applied to $\bar\ell_i$ gives
\begin{equation*}
  \esp[\ell_{n+1}(\widehat\lambda_{\mathrm{env}})]
  \le
  \esp[\bar\ell_{n+1}(\widehat\lambda_{\mathrm{env}})]
  \le \alpha,
\end{equation*}
satisfying Equation~(\ref{eq:crc-risk-target}). Nonetheless, the above envelope approach can be conservative as a local spike is treated as permanent. Hence parameters that are safe on the realized curve can be discarded.

\subsection{Robust core on ordered spaces}\label{subsec:robust-core}

Robust-core CRC handles non-monotonicity by lifting the problem and calibrating a set in $2^\Lambda$. For a set $C\subseteq\Lambda$, define $L_i(C)\coloneqq\sup_{\lambda\in C}\ell_i(\lambda)$, with $L_i(\emptyset)=0$. This set loss is monotone under inclusion. If $C\subseteq C'$, then $L_i(C)\le L_i(C')$. Thus, even when the scalar curve $\lambda\mapsto\ell_i(\lambda)$ is non-monotone, the lifted set-valued problem has a monotone structure. We assume the relevant suprema exist. This condition holds automatically in the finite-grid setting introduced in Section~\ref{subsec:finite-grid}.

Let $\mathcal L\subseteq[0,1]^\Lambda$ be an admissible class containing all possible loss functions. A complete-data certificate is a symmetric map $G:\mathcal L^{n+1}\to 2^{\Lambda}$ whose output has controlled empirical set-loss for every complete tuple $\ell_{1:n+1}\in\mathcal L^{n+1}$:
\begin{equation}\label{eq:abstract-feasibility}
  \frac{1}{n+1}\sum_{i=1}^{n+1}L_i\bigl(G(\ell_{1:n+1})\bigr) \le \alpha.
\end{equation}
Symmetry means that $G$ depends only on the unordered collection of loss curves. Concretely, for every permutation $\sigma$ of $\{1,\ldots,n+1\}$,
\begin{equation*}
  G(\ell_1,\ldots,\ell_{n+1})
  =
  G(\ell_{\sigma(1)},\ldots,\ell_{\sigma(n+1)}).
\end{equation*}

If the unobserved loss profile $\ell_{n+1}$ were observed, any parameter in $G(\ell_{1:n+1})$ would be safe by the original conformal-risk-control exchangeability argument. Indeed, condition on the unordered multiset $\{\ell_1,\ldots,\ell_{n+1}\}$. The index of the unobserved profile is then uniform on $\{1,\ldots,n+1\}$, while every $\lambda\in G(\ell_{1:n+1})$ is measurable with respect to this unordered multiset by symmetry. Hence, for any such $\lambda$,
\begin{equation*}
  \mathbb E\!\left[\ell_{n+1}(\lambda)
  \mid \{\ell_1,\ldots,\ell_{n+1}\}\right]
  =
  \frac{1}{n+1}\sum_{i=1}^{n+1}\ell_i(\lambda)
  \le
  \frac{1}{n+1}\sum_{i=1}^{n+1}
  L_i\bigl(G(\ell_{1:n+1})\bigr)
  \le \alpha.
\end{equation*}
One canonical way to construct such a $G$ is the greedy selector in Algorithm~\ref{alg:greedy-selector}, which scans $\Lambda$ from most to least preferred and admits a parameter when its inclusion preserves Equation~\eqref{eq:abstract-feasibility}. The rule is permutation invariant in $\ell_1,\ldots,\ell_{n+1}$ and satisfies Equation~\eqref{eq:abstract-feasibility} by construction.

\begin{algorithm}[t]
  \caption{Greedy Selector}\label{alg:greedy-selector}
  \begin{algorithmic}[1]
    \STATE \textbf{Input.} Ordered parameter space $(\Lambda,\preceq)$, complete loss profiles $\ell_1,\ldots,\ell_{n+1}\in\mathcal L$, level $\alpha$.
    \STATE \textbf{Initialize.} $C \gets \varnothing$, $R \gets \Lambda$.
    \WHILE{$R\neq\varnothing$}
    \STATE
    \begin{equation*}
      R \gets \left\{\lambda\in\Lambda\setminus C:
      \frac{1}{n+1}\sum_{i=1}^{n+1}L_i(C\cup\{\lambda\})\leq \alpha\right\}
    \end{equation*}
    \IF{$R\neq \varnothing$}
    \STATE $\lambda^\star\gets \max_{\preceq} R$, \quad $C \gets C\cup\{\lambda^\star\}$.
    \ENDIF
    \ENDWHILE
    \RETURN $C$.
  \end{algorithmic}
\end{algorithm}

Nonetheless, $\ell_{n+1}$ is not observed and $G(\ell_1,\ldots,\ell_{n+1})$ cannot be evaluated directly. Instead, consider the parameters that would be certified no matter which admissible $\ell_{n+1}$ is realized:
\begin{equation*}
  C_n\coloneqq\bigcap_{w\in\mathcal L}G(\ell_{1:n},w),
\end{equation*}
where $w$ is a placeholder for any possible realization of $\ell_{n+1}$. Since the realized $\ell_{n+1}$ belongs to $\mathcal L$, $\lambda\in C_n$ implies $\lambda\in G(\ell_1,\ldots,\ell_{n+1})$. Following \citet{angelopoulos2026conformal}, we also assume a fallback parameter $\lambda_{\mathrm{safe}}\in\Lambda$ satisfying
\begin{equation}\label{eq:safe-choice}
  \ell_i(\lambda_{\mathrm{safe}})=0,
  \qquad i=1,\ldots,n+1,
\end{equation}
almost surely. The robust-core estimator chooses the most preferred parameter in $C_n$ when this set is nonempty and falls back otherwise:
\begin{equation}\label{eq:robust-core-estimator}
  \widehat\lambda_{\mathrm{core}} \coloneqq
  \begin{cases}
    \max_{\preceq} C_n, & C_n\neq\varnothing, \\
    \lambda_{\mathrm{safe}}, & C_n=\varnothing.
  \end{cases}
\end{equation}

\begin{theorem}[Validity of the robust core]\label{thm:tree-crc-valid}
  Under the setup above, suppose the loss profiles are exchangeable, $G$ is symmetric and satisfies Equation~(\ref{eq:abstract-feasibility}), and Equation~(\ref{eq:safe-choice}) holds. Then $\esp[\ell_{n+1}(\widehat\lambda_{\mathrm{core}})]\le \alpha$.
\end{theorem}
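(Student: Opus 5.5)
The plan is to avoid analysing $\widehat\lambda_{\mathrm{core}}$ directly and instead dominate its loss by the set loss of the symmetric certificate evaluated on the complete tuple. Write $C^\ast\coloneqq G(\ell_1,\ldots,\ell_{n+1})$. The first step is the pointwise inequality
\begin{equation*}
  \ell_{n+1}(\widehat\lambda_{\mathrm{core}})\le L_{n+1}(C^\ast)\qquad\text{almost surely.}
\end{equation*}
This inequality is proved by splitting into two cases.

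\emph{Case $C_n\neq\varnothing$.} Then $\widehat\lambda_{\mathrm{core}}\in C_n$. Because the realized $\ell_{n+1}$ lies in $\mathcal L$, the intersection defining $C_n$ includes the term $w=\ell_{n+1}$, so $C_n\subseteq G(\ell_{1:n},\ell_{n+1})=C^\ast$. It follows that $\widehat\lambda_{\mathrm{core}}\in C^\ast$ and $\ell_{n+1}(\widehat\lambda_{\mathrm{core}})\le\sup_{\lambda\in C^\ast}\ell_{n+1}(\lambda)=L_{n+1}(C^\ast)$.

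\emph{Case $C_n=\varnothing$.} Then $\widehat\lambda_{\mathrm{core}}=\lambda_{\mathrm{safe}}$, and Equation~(\ref{eq:safe-choice}) gives $\ell_{n+1}(\lambda_{\mathrm{safe}})=0\le L_{n+1}(C^\ast)$, since set losses are nonnegative (with $L_{n+1}(\varnothing)=0$).

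This reduction deliberately replaces a data-dependent scalar choice, which depends asymmetrically on the hidden profile through $C_n$, with a quantity built from the symmetric map $G$. That is what makes the exchangeability argument of the previous paragraph applicable.

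The second step takes expectations and uses exchangeability at the level of set losses rather than at a fixed $\lambda$. Condition on the unordered multiset $\mathcal M=\{\ell_1,\ldots,\ell_{n+1}\}$. By symmetry of $G$, the set $C^\ast$ is $\sigma(\mathcal M)$-measurable. By exchangeability, the position of the hidden profile within $\mathcal M$ is uniform on $\{1,\ldots,n+1\}$. Since $C\mapsto L_i(C)$ is a deterministic functional of the curve $\ell_i$ once $C$ is fixed, we obtain
\begin{equation*}
  \esp\bigl[L_{n+1}(C^\ast)\mid\mathcal M\bigr]=\frac{1}{n+1}\sum_{i=1}^{n+1}L_i(C^\ast)\le\alpha,
\end{equation*}
where the inequality is Equation~(\ref{eq:abstract-feasibility}) applied to the complete tuple. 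Combining this with the first step and the tower property gives $\esp[\ell_{n+1}(\widehat\lambda_{\mathrm{core}})]\le\esp[L_{n+1}(C^\ast)]\le\alpha$.

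The main obstacle is making the conditioning argument rigorous when the multiset contains ties or when the profiles carry implicit auxiliary randomness. I would handle this in the standard way: for any symmetric measurable $f$ and any permutation $\sigma$, exchangeability gives $\esp[L_{n+1}(C^\ast)]=\esp[L_{\sigma(n+1)}(C^\ast)]$. Averaging over the $n+1$ transpositions that swap index $n+1$ with each $i$ then yields $\esp[L_{n+1}(C^\ast)]=\esp\bigl[\tfrac{1}{n+1}\sum_i L_i(C^\ast)\bigr]$, with no need for explicit conditioning.

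Two measurability issues also need care. The first is that $\widehat\lambda_{\mathrm{core}}$ and $L_i(C^\ast)$ should be measurable; the paper assumes the relevant suprema exist, and measurability is automatic on finite grids. The second is boundedness of the losses in $[0,1]$, which guarantees that all expectations are finite.
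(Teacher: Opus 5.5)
Your proposal is correct and matches the paper's proof step for step. You use the same pointwise domination $\ell_{n+1}(\widehat\lambda_{\mathrm{core}})\le L_{n+1}(C)$ with $C=G(\ell_1,\ldots,\ell_{n+1})$, obtained from $C_n\subseteq C$ together with the fallback case, and then the same symmetry-plus-exchangeability argument that $L_1(C),\ldots,L_{n+1}(C)$ are identically distributed, combined with Equation~(\ref{eq:abstract-feasibility}). Your transposition-averaging remark simply spells out that identical-distribution step more explicitly than the paper does.
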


Nonetheless, $C_n$ is not generally computationally tractable as it requires the intersection of $G(\ell_{1:n},w)$ over all admissible missing curves $w$. We now specialize to a finite grid and construct $C_n$ with $G$ fixed as the greedy selector in Algorithm~\ref{alg:greedy-selector}.

\subsection{Finite-grid specialization}\label{subsec:finite-grid}

Consider a finite grid $\Lambda=\{\lambda_1,\ldots,\lambda_m\}\subset\mathbb R$ listed from least to most preferred, so $\lambda_1\preceq\cdots\preceq\lambda_m$. On a grid, the greedy certificate of Algorithm~\ref{alg:greedy-selector} reduces to a single scan from $\lambda_m$ down to $\lambda_1$. When the scan visits $\lambda_k$, it includes $\lambda_k$ in the running set $C$ exactly when
\begin{equation*}
  \text{include } \lambda_k \text{ in } C
  \quad\Longleftrightarrow\quad
  \frac{1}{n+1}\sum_{i=1}^{n+1}
  \max_{\lambda\in C\cup\{\lambda_k\}}\ell_i(\lambda)\le \alpha.
\end{equation*}
The scan reproduces Algorithm~\ref{alg:greedy-selector} because the coordinatewise maxima are monotone in $C$: a point that fails when first visited cannot become feasible after $C$ has grown. At calibration time, however, $\ell_{n+1}$ is not observed. We must compute the intersection over all possible completions $w:\Lambda\to[0,1]$, where $w(\lambda_k)$ is the loss the unobserved run would incur at $\lambda_k$. Letting $\mathcal U=[0,1]^\Lambda$ denote the unconstrained completion class, the robust core specializes to
\begin{equation*}
  C_n=\bigcap_{w\in\mathcal U}G(\ell_{1:n},w),
\end{equation*}
where $G(\ell_{1:n},w)$ denotes the scan applied to the augmented tuple $(\ell_1,\ldots,\ell_n,w)$. A grid point lies in $C_n$ when no admissible completion can dislodge it. Since $\mathcal U$ is infinite, $C_n$ cannot be obtained by enumerating completions. Algorithm~\ref{alg:crc-tree} instead tracks, for each scan branch, a finite sufficient statistic of its consistent possibilities.

\paragraph{Algorithm description.} Before processing $\lambda_k$, the include or skip decision depends on the already selected set $C$ only through the calibration maxima $M_i=\max_{\lambda\in C}\ell_i(\lambda)$ and the held-out maximum $u=\max_{\lambda\in C}w(\lambda)$. The vector $M$ is pinned down by $C$ and the observed losses, while $u$ depends on the unobserved completion $w$. Rather than carry all completions consistent with a branch, the algorithm stores only their projection onto $u$, which remains an interval $I=[a,b]\subseteq[0,1]$ throughout the recursion. A frontier state is therefore a pair $(M,I)$, initialized at $(0,[0,0])$ before the scan begins. After deduplication, one state may represent branches with different selected sets, but their future decisions coincide because they share the same sufficient pair $(M,I)$.

At grid point $\lambda_k$, write $M'_i=\max\{M_i,\ell_i(\lambda_k)\}$ and $\Theta=(n+1)\alpha-\sum_{i=1}^n M'_i$. For $u\in[a,b]$ and $v=w(\lambda_k)\in[0,1]$, the scan includes $\lambda_k$ iff
\begin{equation*}
  \frac{1}{n+1}\left(\sum_{i=1}^n M'_i+\max(u,v)\right)\le \alpha,
  \qquad\text{equivalently}\qquad
  \max(u,v)\le \Theta.
\end{equation*}
Each state branches into at most two successors. The include branch is reachable when $a\le\Theta$; then $M$ becomes $M'$ and the held-out interval becomes $[a,\min\{1,\Theta\}]$. The skip branch is reachable when $\Theta<1$, since choosing $w(\lambda_k)>\Theta$ skips inclusion while leaving $(M,I)$ fixed.

The scan moves from most to least preferred, expanding the frontier and merging coincident states. It returns the first grid point $\lambda_k$ at which the include branch is reachable from every frontier state and the skip branch from none. Then every admissible completion forces inclusion of $\lambda_k$, so $\lambda_k\in C_n$ as the grid is scanned in order of decreasing preference.

\begin{algorithm}[t]
  \caption{Robust Core Tree-CRC}\label{alg:crc-tree}
  \begin{algorithmic}[1]
    \STATE \textbf{Input.} Calibration losses $l_i(k)=\ell_i(\lambda_k)$, grid $\lambda_1\preceq\cdots\preceq\lambda_m$, level $\alpha$, fallback $\lambda_{\mathrm{safe}}$.
    \STATE \textbf{Initialize.} Frontier $\mathcal F \gets \{(M=0^n,I=[0,0])\}$.
    \FOR{$k=m,m-1,\ldots,1$}
    \STATE $\mathcal F' \gets \varnothing$, $\mathrm{AllInclude} \gets \mathrm{true}$.
    \FOR{each state $(M,I=[a,b])\in\mathcal F$}
    \STATE $M' \gets \max(M,l_{1:n}(k))$ coordinatewise.
    \STATE $\Theta \gets (n+1)\alpha - \sum_{i=1}^n M_i'$.
    \IF{$a\le\Theta$}
    \STATE Add $(M',[a,\min\{1,\Theta\}])$ to $\mathcal F'$.
    \ENDIF
    \IF{$\Theta<1$}
    \STATE Add $(M,[a,b])$ to $\mathcal F'$, set $\mathrm{AllInclude} \gets \mathrm{false}$.
    \ENDIF
    \ENDFOR
    \STATE Deduplicate repeated states in $\mathcal F'$, set $\mathcal F \gets \mathcal F'$.
    \IF{$\mathrm{AllInclude}$}
    \RETURN $\lambda_k$.
    \ENDIF
    \ENDFOR
    \RETURN $\lambda_{\mathrm{safe}}$.
  \end{algorithmic}
\end{algorithm}

\begin{lemma}[Exactness of Robust Core Tree-CRC]\label{lem:crc-tree-correct}
  Let $\mathcal U=[0,1]^\Lambda$, and let $G(\ell_{1:n},w)$ be the set returned by the complete-data greedy scan when the missing profile is $w\in\mathcal U$. Algorithm~\ref{alg:crc-tree} returns
  \begin{equation*}
    \max_{\preceq}\bigcap_{w\in\mathcal U}G(\ell_{1:n},w)
  \end{equation*}
  when this intersection is nonempty, and returns $\lambda_{\mathrm{safe}}$ otherwise.
\end{lemma}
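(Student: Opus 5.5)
Since the scan visits $\lambda_m,\lambda_{m-1},\ldots,\lambda_1$ in decreasing preference, the maximum of $C_n=\bigcap_{w\in\mathcal U}G(\ell_{1:n},w)$ is the first visited $\lambda_k$ that lies in $G(\ell_{1:n},w)$ for every $w$. The plan is to show that the frontier after processing $\lambda_m,\ldots,\lambda_{k+1}$ exactly describes the set of pairs (calibration maxima $M$, held-out maximum $u$) that are realizable by some completion $w$. Then $\mathrm{AllInclude}$ at step $k$ will be equivalent to $\lambda_k\in G(\ell_{1:n},w)$ for all $w$.

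\textbf{Step 1: the frontier invariant.} For a completion $w$, let $C^w_{>k}$ be the set selected by the greedy scan of Section~\ref{subsec:finite-grid} on $(\ell_{1:n},w)$ after visiting $\lambda_m,\ldots,\lambda_{k+1}$. Let $M^w_i=\max_{\lambda\in C^w_{>k}}\ell_i(\lambda)$ and $u^w=\max_{\lambda\in C^w_{>k}}w(\lambda)$, with value $0$ on the empty set. Only the restriction of $w$ to $\{\lambda_{k+1},\ldots,\lambda_m\}$ matters. I would prove, by induction on the number of processed points, that
\[
\{(M^w,u^w):w\in\mathcal U\}=\bigcup_{(M,[a,b])\in\mathcal F}\{M\}\times[a,b].
\]
The base case is $(0,[0,0])$.

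For the inductive step, fix a state $(M,[a,b])$ and $u\in[a,b]$. The new value $v=w(\lambda_k)$ ranges over $[0,1]$ freely and independently of earlier coordinates, because $\mathcal U$ is a product class. The point is included iff $\max(u,v)\le\Theta$.
\begin{itemize}
\item Inclusion is possible iff $u\le\Theta$. The new held-out maximum is then $\max(u,v)$, which ranges over $[u,\min(1,\Theta)]$. Taking the union over $u\in[a,\min(b,\Theta)]$ gives exactly $[a,\min\{1,\Theta\}]$, since $b\le 1$ and the intervals chain together.
\item Skipping is possible iff some $v>\Theta$ exists in $[0,1]$, i.e.\ $\Theta<1$, or $u>\Theta$. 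In that case $(M,u)$ is unchanged.
\end{itemize}
The algorithm adds the skip state only when $\Theta<1$. When $\Theta\ge1$ but $u>\Theta$, skipping would also be forced. However, $u\le b\le1\le\Theta$, so this case cannot occur.

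I expect this interval bookkeeping to be the main obstacle. It is the one place where exactness, not mere soundness, must be argued. Care is needed about whether the union of realizable $u$ over the include branch is an interval, given that the original interval is $[a,b]$ with possibly $b>\Theta$. Care is also needed that merging states with equal $(M,I)$ loses nothing; this holds because future decisions depend on the past only through $(M,u)$.

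\textbf{Step 2: the return criterion.} By the invariant, $\lambda_k\in G(\ell_{1:n},w)$ for all $w$ iff, for every frontier state and every $u\in[a,b]$ and $v\in[0,1]$, we have $\max(u,v)\le\Theta$. This is equivalent to $b\le\Theta$ and $\Theta\ge1$, which reduces to $\Theta\ge1$ since $b\le1$. That is precisely the condition that no state triggers the skip branch, so $\mathrm{AllInclude}$ holds; the include branch is then automatically reachable since $a\le1\le\Theta$.

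Conversely, if some state has $\Theta<1$, the completion realizing that state with $v=1$ excludes $\lambda_k$. Hence the first $k$ with $\mathrm{AllInclude}$ is the largest index in $C_n$. If no such $k$ exists, then $C_n=\varnothing$ and the algorithm returns $\lambda_{\mathrm{safe}}$. Finally, the greedy scan coincides with Algorithm~\ref{alg:greedy-selector}, as noted in the text, so $G$ is the selector of Lemma~\ref{lem:crc-tree-correct}.
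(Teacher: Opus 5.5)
Your proposal is correct and follows essentially the same route as the paper's proof. Both use an inductive invariant that the frontier represents exactly the attainable pairs of calibration maxima and held-out maximum. The include branch yields $[a,\min\{1,\Theta\}]$ and the skip branch is reachable iff $\Theta<1$, using that $\mathcal U$ is a product class, so $\mathrm{AllInclude}$ at $\lambda_k$ is equivalent to every completion including $\lambda_k$. Your explicit handling of the union over $u\in[a,\min\{b,\Theta\}]$ and of the forced-skip case $u>\Theta$ spells out details the paper states more tersely.
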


Theorem~\ref{thm:tree-crc-valid} certifies that the robust core controls risk, and Lemma~\ref{lem:crc-tree-correct} certifies that Algorithm~\ref{alg:crc-tree} computes it on the grid. Together they yield a finite-sample calibration rule for non-monotone loss profiles.

\subsection{Non-monotone CRC experiment}

We illustrate robust-core calibration on clipped-Gaussian non-monotone losses over the ordered grid $\Lambda=\{\lambda_k=k/100:k=1,\ldots,100\}$. For example $i$ and grid point $\lambda_k$, the loss is
\begin{equation*}
  \ell_i(\lambda_k)=\mathrm{clip}(\lambda_k+\sigma Z_{i,k},0,1),
  \qquad Z_{i,k}\stackrel{\mathrm{ind}}{\sim}\mathcal N(0,1),
\end{equation*}
independent across examples and grid points. We separately define $\ell_i(0)=0$ for every run and set the fallback $\lambda=0$. We sweep over noise levels $\sigma\in\{0.20,0.10,0.05\}$ and calibration sizes $n\in\{100,200,\ldots,1000,1200,1500,2000,2500,3000\}$, at level $\alpha=0.1$. Larger $\lambda$ values are more preferred and more aggressive. We compare robust-core CRC, the monotone-envelope rule of Equation~(\ref{eq:envelope-crc}), and the finite-grid stable-CRC baseline of \citet{angelopoulos2026conformal}. Figure~\ref{fig:crc-gaussian} plots the mean selected parameter with 95\% Monte Carlo intervals over 100 trials. The dotted line is the highest grid point with population risk less than $\alpha$.

\begin{figure}[htbp]
  \centering
  \includegraphics[width=\linewidth]{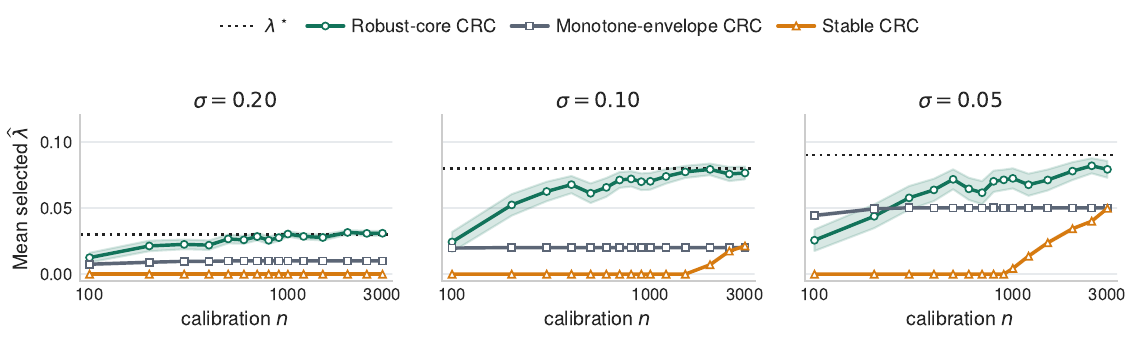}
  \caption{Mean selected parameter on the clipped-Gaussian grid,
    with 95\% Monte Carlo intervals. The dotted line
    marks the highest fixed safe parameter
    $\lambda^\star=\max_{\preceq}\{\lambda_k:\esp[\ell_i(\lambda_k)]\le\alpha\}$.   The three panels show the three clipped-Gaussian noise levels in the sweep.}
  \label{fig:crc-gaussian}
\end{figure}

All three rules control risk, so the comparison is about the aggressiveness with which each rule is able to select the parameter $\widehat\lambda$. Robust-core CRC assumes only exchangeability and boundedness, and its selected parameter still climbs with $n$ toward the fixed safe parameter $\lambda^\star$. The two baselines stay well below it for different reasons. The monotone-envelope rule replaces every loss profile by its running maximum, so a single upward fluctuation at a small grid point suppresses all larger candidates. The stable-CRC baseline retains the raw profiles but subtracts an explicit concentration slack, which shrinks slowly with $n$. The robust core avoids both penalties and approaches $\lambda^\star$.

An astute reader might have noticed that Algorithm~\ref{alg:crc-tree} is exponential in the grid size $|\Lambda|$ in the worst case, since the frontier can hold up to $2^{|\Lambda|}$ states. Nonetheless, we note that the scan terminates as soon as it reaches a grid point that is included from every frontier state, and that states that coincide are merged. In practice, we found the rule to be efficient. Figure~\ref{fig:crc-runtime} reports its selection time on the clipped-Gaussian losses above, for grid sizes up to $|\Lambda|=250$.

\begin{figure}[htbp]
  \centering
  \includegraphics[width=\linewidth]{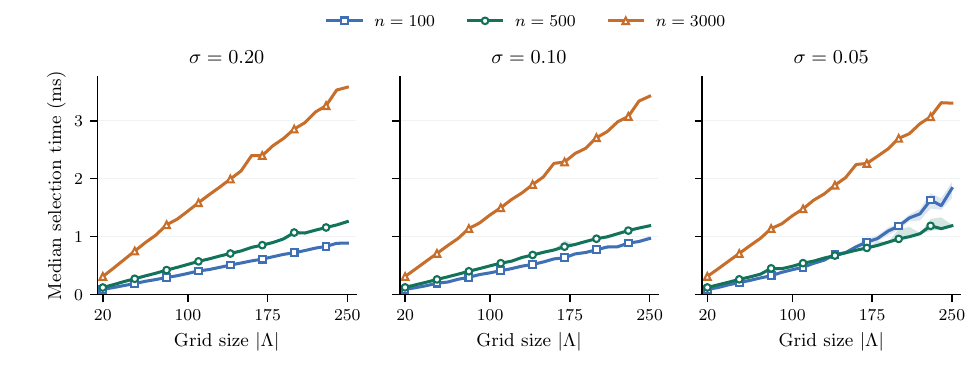}
  \caption{\textbf{Selection time of robust-core CRC against the grid size.} Each panel corresponds to one noise level $\sigma$ of the clipped-Gaussian losses and shows the median selection time in milliseconds against the grid size $|\Lambda|$, at level $\alpha=0.1$, with one curve per calibration size $n$. Curves show the median over 100 independent calibration datasets of each dataset's median time over five repetitions, and shaded bands show pointwise 95\% bootstrap intervals for this median.}
  \label{fig:crc-runtime}
\end{figure}

\section{End-to-End Valid Stoppage}\label{sec:fdr}

We now proceed to introduce the full procedure, which combines the preceding $e$-value construction and CRC results into a single pipeline. To recap our setting and notation, we consider an unlabeled deployment run $\testrun = ((X_t,V_t,S_t,Y_t))_{t=1}^{T}$ whose correctness labels $Y_1,\ldots,Y_T$ are unavailable. At each round $t$, we observe the score window $W_t\coloneqq(S_{t-K},\ldots,S_t)\in\mathcal{S}^{K+1}$ formed from the most recent $K{+}1$ verifier scores and return an accepted set $R_t\subseteq\{1,\ldots,t\}$. These sets are nested, with $R_1\subseteq R_2\subseteq\cdots\subseteq R_T$. Our goal is to ensure that $\esp[\mathrm{FDP}(R_\tau)]\le\alpha$ for every stopping time $\tau\le\tau_\kappa$, as formalized in Equation~(\ref{eq:trajectory-fdr}). Throughout this section, we make use of two different splits of labelled data, the first for constructing the roundwise $e$-value and the second for calibrating the online $e$-BH procedure to make it less conservative. We denote the dataset used in constructing the roundwise $e$-values by 
\begin{equation*}
  \bcal^{\es}
  =
  \{\run^{\es}_1,\ldots,\run^{\es}_{n_{\es}}\},
\end{equation*}
and use the superscript $\es$ to mark quantities belonging to runs in $\bcal^\es$, for example $S^{\es}_{t,j}$ and $Y^{\es}_{t,j}$. We assume that the runs in $\bcal^\es$ and $\testrun$ are independent and identically distributed. 

\subsection{Valid stoppage on the unobserved run}\label{subsec:uncalibrated-rule}

At round $t$, we use $W_t$ to test $H_{0,t}:W_t\sim P_{0,t}$ against $H_{1,t}:W_t\sim P_{1,t}$, where $P_{i,t}$ is the distribution of $W_t|Y_t=i$ for $i\in\{0,1\}$. We construct the corresponding $e$-value using the leave-one-out index-betting procedure of Section~\ref{sec:marginal}. Mirroring Section~\ref{sec:overview}, define
\begin{equation*}
  \calW_t^{\es}
  \coloneqq
  \{(W_t,0)\}
  \cup
  \bigl\{(W^{\es}_{t,j},Y^{\es}_{t,j}):j\le n_{\es}\bigr\} \;\;\; \text{ and }  \;\;\; \ncal_t^{\es} \coloneqq \{j\le n_{\es}:Y^{\es}_{t,j}=0\}.
\end{equation*}
For each $a\in\ncal_t^{\es}\cup\{\mathrm{test}\}$, let $\widehat{L}_{t,-a}$ be the likelihood-ratio estimator fitted to $\calW_t^{\es}$ with the sample indexed by $a$ removed, using an order-invariant fitting rule. The resulting roundwise $e$-value is
\begin{equation}\label{eq:split-evalue}
  \widehat{E}_t
  =
  \bigl(|\ncal_t^{\es}|+1\bigr)
  \frac{\widehat{L}_{t,-\mathrm{test}}(W_t)}
  {\widehat{L}_{t,-\mathrm{test}}(W_t)
  + \sum_{j\in\ncal_t^{\es}}
  \widehat{L}_{t,-j}(W^{\es}_{t,j})}.
\end{equation}
Under the round-$t$ null, Proposition~\ref{prop:index-valid} gives $\esp_{0,t}[\widehat{E}_t]\le 1$.

Using the online $e$-BH procedure, testing using the sequence of $e$-values $(\widehat{E}_t)_{t=1}^{T}$ yields nested accepted sets $R_t(\lambda)$ and the $\kappa$-acceptance time $\tau_\kappa(\lambda)=\inf\{t:|R_t(\lambda)|\ge\kappa\}\wedge T$. Explicitly, for a nonnegative unit schedule $(\gamma_t)_{t\ge1}$ with $\sum_{t\ge1}\gamma_t=1$ and a scale $\lambda\ge0$,
\begin{equation}\label{eq:ebh-rule}
  R_t(\lambda)
  =
  \left\{s\le t:
    \widehat{E}_s
    \ge \frac{1}{\alpha\lambda\gamma_s k_t(\lambda)}
  \right\},
  \qquad
  k_t(\lambda)
  =
  \max\left\{k:
    \sum_{s=1}^{t}
    \one\left\{\widehat{E}_s
    \ge \frac{1}{\alpha\lambda\gamma_s k}\right\}\ge k
  \right\},
\end{equation}
where $\max(\emptyset)=0$ and $1/0=\infty$. At $\lambda=1$, marginal validity of the roundwise $e$-values and the online $e$-BH guarantee of \citet{fischer2024onlineebh} are sufficient to yield
\begin{equation*}
  \esp\Bigl[\sup_{t\ge 1}\mathrm{FDP}(R_t(1))\Bigr]\le\alpha
\end{equation*}
under arbitrary within-run dependence, and hence the trajectory guarantee of Equation~(\ref{eq:trajectory-fdr}). 

\subsection{Robust-core calibration}\label{subsec:robust-core-calibration}
While setting $\lambda=1$ yields a valid FDR-controlling algorithm, this choice can be conservative \citep{fischer2024onlineebh}. To select $\lambda>1$, we apply the robust-core CRC algorithm using a calibration dataset $\bcal^{\cs}\coloneqq \{\run^{\cs}_1,\ldots,\run^{\cs}_{n_{\cs}}\}$.
We assume that the runs in $\bcal^\cs$ and $\testrun$ are exchangeable and that their joint collection is independent of $\bcal^{\es}$. Using $\bcal^{\es}$, we apply the construction of Section~\ref{subsec:uncalibrated-rule} to each conformal calibration run $\run_i^{\cs}$ in lieu of $\testrun$. For $i\in\{1, \ldots, n_{\cs}\}$, this produces $\widehat E_t^{(i)}$, $R_t^{(i)}(\lambda)$, and $\tau_\kappa^{(i)}(\lambda)$. Unindexed quantities $\widehat E_t$, $R_t(\lambda)$, and $\tau_\kappa$ continue to refer to $\testrun$.

Fix a finite ordered grid $\Lambda=\{\lambda_1< \cdots<\lambda_M\}\subset(1,\infty)$ and $\Lambda_0\coloneqq\{0\}\cup\Lambda$, with the convention $R_t^{(i)}(0)=\emptyset$. For run $i$, define the loss function
\begin{equation}\label{eq:cal-loss}
  \ell_i(\lambda)
  \coloneqq
  \max_{1\le t\le\tau_\kappa^{(i)}(\lambda)}
  \mathrm{FDP}\bigl(R_t^{(i)}(\lambda)\bigr),
  \qquad
  \lambda\in\Lambda_0,
\end{equation}
so that $\mathrm{FDP}(R_\tau^{(i)}(\lambda))\le\ell_i(\lambda)$ whenever $\tau\le\tau_\kappa^{(i)}(\lambda)$. The fallback $\lambda=0$ makes no discoveries, so $\ell_i(0)=0$. Given that the runs in $\bcal^{\cs}$ are labeled, the loss profiles $\ell_1,\ldots,\ell_{n_{\cs}}$ are fully observable. The  loss profile $\ell$ for $\testrun$ is nonetheless unobserved as its oracle ground-truth labels are hidden. As detailed in Section~\ref{subsec:finite-grid}, applying the finite-grid robust-core procedure returns the most preferred $\widehat{\lambda}_{\mathrm{core}}\in\Lambda$, which remains valid for any test loss function $\ell$, or the fallback $0$ when the robust core is empty.

\begin{proposition}[Exchangeability of loss profiles]\label{prop:split-loss-exchangeable}
  Suppose the runs in $\bcal^{\cs}\cup\{\testrun\}$ are exchangeable and jointly independent of $\bcal^{\es}$. Applying the same construction to each run yields loss profiles $\ell_1,\ldots,\ell_{n_{\cs}},\ell$ that are exchangeable conditional on $\bcal^{\es}$ and hence marginally exchangeable.
\end{proposition}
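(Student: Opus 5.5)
The argument represents each loss profile as one fixed measurable map applied to its own run, with $\bcal^{\es}$ entering only as a common frozen argument. Exchangeability is then pushed forward through that map.

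\textbf{Step 1: a common map.} Fix a realization $b$ of $\bcal^{\es}$. For a generic run $r=((X_t,V_t,S_t,Y_t))_{t=1}^T$, I will define $\Phi(r,b)\in[0,1]^{\Lambda_0}$ as follows:
\begin{itemize}
\item for each $t$, form the window $W_t(r)$ and the augmented sample $\calW_t^{\es}$ with $(W_t(r),0)$ inserted;
\item fit the leave-one-out estimators $\widehat L_{t,-a}$ with the fixed order-invariant rule, and compute $\widehat E_t$ as in Equation~\eqref{eq:split-evalue};
\item run the online $e$-BH rule of Equation~\eqref{eq:ebh-rule} for each $\lambda\in\Lambda_0$, with fixed $\alpha$ and $(\gamma_t)$;
\item read off $\tau_\kappa(\lambda)$ and evaluate the loss of Equation~\eqref{eq:cal-loss} using the labels $Y_t$ of $r$.
\end{itemize}
Every step takes only $r$ and $b$ as inputs; the deterministic tuning constants $T,K,\kappa,\alpha,(\gamma_t),\Lambda_0$ and the fitting rule are fixed in advance. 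Hence $\ell_i=\Phi(\run_i^{\cs},\bcal^{\es})$ and $\ell=\Phi(\testrun,\bcal^{\es})$. Crucially, the pseudo-label $0$ replaces each run's own label inside the $e$-value, while the true label of that same run enters only the FDP. So the calibration runs and the test run are processed identically, and no run's statistic depends on any other run in $\bcal^{\cs}\cup\{\testrun\}$.

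\textbf{Step 2: measurability.} I will check that $\Phi$ is jointly measurable. It is a finite composition of the measurable fitting rule, finitely many arithmetic operations, indicators, finite maxima over $t\le T$ and $k\le t$, and a stopping index taking finitely many values. This is routine; I will state it as an assumption on the fitting rule, namely that it is jointly measurable in its training sample and evaluation point.

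\textbf{Step 3: conditional exchangeability.} By joint independence, conditional on $\bcal^{\es}=b$ the vector $Z=(\run_1^{\cs},\ldots,\run_{n_{\cs}}^{\cs},\testrun)$ has the same law as unconditionally, so it is exchangeable. For any permutation $\sigma$ I then write
\[
(\Phi(Z_{\sigma(1)},b),\ldots,\Phi(Z_{\sigma(n_{\cs}+1)},b))=F_b(Z_\sigma),
\]
where $F_b$ applies $\Phi(\cdot,b)$ coordinatewise. Since $Z_\sigma\overset{d}{=}Z$ given $b$, it follows that $F_b(Z_\sigma)\overset{d}{=}F_b(Z)$. This is conditional exchangeability of $(\ell_1,\ldots,\ell_{n_{\cs}},\ell)$.

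\textbf{Step 4: marginal exchangeability.} For a measurable set $A$,
\[
\Pr\bigl((\ell_{\sigma(\cdot)})\in A\bigr)=\esp\bigl[\Pr\bigl((\ell_{\sigma(\cdot)})\in A\mid\bcal^{\es}\bigr)\bigr]=\esp\bigl[\Pr\bigl((\ell_{\cdot})\in A\mid\bcal^{\es}\bigr)\bigr]=\Pr\bigl((\ell_{\cdot})\in A\bigr),
\]
so exchangeability holds marginally as well. To keep conditioning rigorous, I will use a regular conditional distribution, which exists because the loss profiles take values in the Polish space $[0,1]^{\Lambda_0}$ with $\Lambda_0$ finite.

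\textbf{Main obstacle.} The delicate point is Step 1. I must confirm that no hidden cross-run coupling enters $\Phi$, which would break the coordinatewise form and hence the pushforward argument. One candidate is the random ordering used in index betting; in Equation~\eqref{eq:split-evalue} that ordering is only an analytic device and $\widehat E_t$ is deterministic given $(r,b)$. If one implements it with external randomization, independent auxiliary seeds per run would be folded into each $Z_i$. A second candidate is the leave-one-out fits, which use $b$ plus only the current run's window and so introduce no coupling.
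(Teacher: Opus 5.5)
Your proposal is correct and follows essentially the same route as the paper. Both condition on $\bcal^{\es}$, note that one fixed map (roundwise $e$-values, online $e$-BH path, stopping time, loss) is applied to each run with $\bcal^{\es}$ frozen, push the conditional exchangeability of the runs through that coordinatewise map, and then integrate out $\bcal^{\es}$. Your remarks on measurability, regular conditional distributions, and auxiliary randomization in the index ordering only make explicit what the paper leaves implicit.
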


\begin{theorem}[End-to-end trajectory-level FDR control]\label{thm:end-to-end}
  Fix $\kappa\ge1$ and assume the conditions of Proposition~\ref{prop:split-loss-exchangeable}. Let $\widehat{\lambda}_{\mathrm{core}}$ be the robust-core choice at level $\alpha$ computed from $\ell_1,\ldots,\ell_{n_{\cs}}$. Then
  \begin{equation}\label{eq:end-to-end-guarantee}
    \esp\bigl[
      \ell(\widehat{\lambda}_{\mathrm{core}})
    \bigr]
    \le \alpha,
  \end{equation}
  and consequently, for every unobserved-run stopping time $\tau\le\tau_{\kappa}(\widehat{\lambda}_{\mathrm{core}})$,
  \begin{equation*}
    \esp\bigl[
      \mathrm{FDP}\bigl(R_{\tau}(\widehat{\lambda}_{\mathrm{core}})\bigr)
    \bigr]
    \le \alpha.
  \end{equation*}
\end{theorem}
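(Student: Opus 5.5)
The plan is to reduce the statement to Theorem~\ref{thm:tree-crc-valid} by conditioning on $\bcal^{\es}$, and then to pass from the process loss to the FDP at a stopping time using the pointwise domination built into Equation~\eqref{eq:cal-loss}.

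\textbf{Step 1: the setup of Section~\ref{sec:crc} holds conditionally on $\bcal^{\es}$.} Fix a realization of $\bcal^{\es}$. Each profile $\ell_i$, and the test profile $\ell$, is then a fixed measurable function of its own run. The profile is obtained by computing $\widehat E_t$ through Equation~\eqref{eq:split-evalue} with $\bcal^{\es}$ frozen, then applying the $e$-BH rule \eqref{eq:ebh-rule}, then forming the loss \eqref{eq:cal-loss}. By Proposition~\ref{prop:split-loss-exchangeable}, $(\ell_1,\ldots,\ell_{n_{\cs}},\ell)$ are exchangeable conditional on $\bcal^{\es}$. Each profile takes values in $[0,1]$ on $\Lambda_0$, since FDPs lie in $[0,1]$. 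The fallback $\lambda=0$ satisfies Equation~\eqref{eq:safe-choice}, because $R_t(0)=\emptyset$ gives $\ell_i(0)=\ell(0)=0$ almost surely.

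\textbf{Step 2: identify the estimator and apply Theorem~\ref{thm:tree-crc-valid}.} Take $G$ to be the greedy selector of Algorithm~\ref{alg:greedy-selector} on the grid $\Lambda$. It is symmetric and satisfies Equation~\eqref{eq:abstract-feasibility} by construction. The admissible class is $\mathcal U=[0,1]^{\Lambda}$, which contains every realizable test profile. By Lemma~\ref{lem:crc-tree-correct}, the output of Algorithm~\ref{alg:crc-tree} coincides exactly with $\widehat\lambda_{\mathrm{core}}$ as defined in Equation~\eqref{eq:robust-core-estimator}, with fallback $0$. Theorem~\ref{thm:tree-crc-valid}, applied under the conditional law given $\bcal^{\es}$, then yields
\[
\esp\bigl[\ell(\widehat\lambda_{\mathrm{core}})\mid\bcal^{\es}\bigr]\le\alpha .
\]
Taking expectations over $\bcal^{\es}$ (the tower property) gives Equation~\eqref{eq:end-to-end-guarantee}.

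\textbf{Step 3: the FDP bound at a stopping time.} Let $\tau\le\tau_\kappa(\widehat\lambda_{\mathrm{core}})$. Since $\tau\ge1$ lies within the range of the maximum in \eqref{eq:cal-loss}, evaluated at $\lambda=\widehat\lambda_{\mathrm{core}}$, we have pointwise
\[
\mathrm{FDP}\bigl(R_\tau(\widehat\lambda_{\mathrm{core}})\bigr)\le \ell(\widehat\lambda_{\mathrm{core}}).
\]
This inequality holds for any random time bounded by $\tau_\kappa$, so the stopping-time property is not even needed. Taking expectations and using Step 2 completes the proof.

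\textbf{Main obstacle.} The delicate point is Step 1. We must check that the hidden test profile is generated by the same map as the calibration profiles, even though the test run's labels enter $\ell$ only through its FDP and not through the $e$-values. In particular, the $e$-value construction must use the pseudo-label $0$ for the evaluated run in both cases, so that the test run and each calibration run are treated symmetrically. We must also check that $\widehat\lambda_{\mathrm{core}}$ depends on $(\ell_1,\ldots,\ell_{n_{\cs}})$ only through the robust core. This is exactly what makes the conditional application of Theorem~\ref{thm:tree-crc-valid} legitimate. Proposition~\ref{prop:split-loss-exchangeable} is stated to cover this, and I would verify it by noting that exchangeability of the runs is preserved under a common measurable map once $\bcal^{\es}$ is fixed.
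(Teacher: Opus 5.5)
Your proposal is correct and follows essentially the same route as the paper. Both condition on $\bcal^{\es}$, obtain conditional exchangeability from Proposition~\ref{prop:split-loss-exchangeable}, identify the output of Algorithm~\ref{alg:crc-tree} with $\widehat\lambda_{\mathrm{core}}$ via Lemma~\ref{lem:crc-tree-correct}, apply Theorem~\ref{thm:tree-crc-valid} with the zero-loss fallback $\lambda=0$, integrate out $\bcal^{\es}$, and finish with the pointwise bound $\mathrm{FDP}(R_\tau(\widehat\lambda_{\mathrm{core}}))\le\ell(\widehat\lambda_{\mathrm{core}})$. Your additional remarks are accurate and make explicit details the paper leaves implicit: the profiles lie in $[0,1]$, so $\mathcal U=[0,1]^{\Lambda}$ is a valid admissible class, and the stopping-time property is not actually used in the last step.
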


\subsection{End-to-end validation on synthetic data}\label{subsec:synthetic-end-to-end}

\begin{figure}[htbp]
  \centering
  \includegraphics[width=\linewidth]{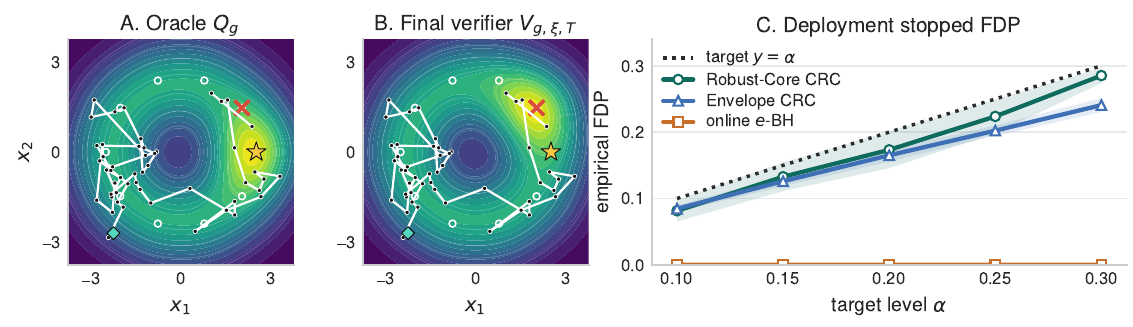}
  \caption{\textbf{Synthetic Gaussian-ring validation.} Panels A and B show the same sampled 50-round trajectory with no stopping rule applied, so they do not depend on $\alpha$. Panel A shows the oracle field $Q_g\coloneqq B+G_g$, which coincides with the initial verifier field $V_{g,\xi,1}$, and Panel B shows the final verifier field $V_{g,\xi,T}$. The filled contours give the value of the field at each point of $\rset^2$, with brighter colors for larger values. The yellow star marks $\mu_g$, and the red cross marks $\mu_{g+\xi}$. Panel C reports the mean stopped FDP against the target level $\alpha$, with 95\% Monte Carlo intervals over 20 independent repetitions.}
  \label{fig:sec6-endtoend}
\end{figure}

We now illustrate the full procedure on a synthetic example. In this setup, a generator moves in the plane toward regions where the verifier score is high. We define a proposal to be correct if it is closer to a hidden target center $\mu_g$ than to any other center. Over the run, the verifier moves its extra reward from the target center to a neighboring center, so high verifier scores become less and less indicative of correctness. Let $\mu_0,\ldots,\mu_9$ be equally spaced on the circle of radius $R=2.5$ centered at the origin in $\rset^2$. Each run draws a hidden target index $g\in\{0,\ldots,9\}$ and a neighbor offset $\xi\in\{-1,+1\}$, with indices taken modulo $10$. Writing $G_k(x)=\exp(-\|x-\mu_k\|^2/2)$ and $B(x)=\sum_{k=0}^{9}G_k(x)$, the time-$t$ verifier score is
\begin{equation*}
  V_{g,\xi,t}(x)=B(x)+w_tG_g(x)+(1-w_t)G_{g+\xi}(x),
  \qquad
  w_t=1-\frac{t-1}{T-1}.
\end{equation*}
In words, the verifier starts by placing additional mass on $\mu_g$, which is then transferred to $\mu_{g+\xi}$ over the run.

Each trajectory uses $T=50$ rounds and starts from a randomly drawn point $X_0\sim\mathcal{N}(0,R^2I_2)$. At round $t$, the generator draws candidate locations $Z_{t,1},\ldots,Z_{t,12}$ conditionally independently according to
\begin{equation*}
  Z_{t,j}\mid X_{t-1}
  \overset{\mathrm{ind}}{\sim}
  \mathcal{N}\bigl(X_{t-1},\sigma_{\mathrm{prop}}^2I_2\bigr),
  \qquad
  j\in\{1,\ldots,12\},
\end{equation*}
where $\sigma_{\mathrm{prop}}=R\sin(\pi/10)$, which is proportional to the distance between two neighboring points on the circle.  Conditional on the candidates, the generator samples an index $J_t$ with
\begin{equation*}
  \mathbb{P}\bigl(J_t=j\mid Z_{t,1},\ldots,Z_{t,12}\bigr)
  =
  \frac{V_{g,\xi,t}(Z_{t,j})}
       {\sum_{\ell=1}^{12}V_{g,\xi,t}(Z_{t,\ell})},
  \qquad
  j\in\{1,\ldots,12\},
\end{equation*}
and sets $X_t=Z_{t,J_t}$. The observed proxy score is $S_t=V_{g,\xi,t}(X_t)$. The oracle label is $Y_t=1$ if $X_t$ is closer to $\mu_g$ than to any other basin center, and $Y_t=0$ otherwise. Thus, a trajectory can continue receiving high verifier scores while moving toward the incorrect neighboring basin $\mu_{g+\xi}$. This design aims to simulate a setting where the verifier and ground-truth correctness are increasingly misaligned as rounds pass.

We repeated the experiment for $20$ runs. For each run, we use $100$ sampled trajectories for the $e$-value construction split $\bcal^{\es}$, $500$ trajectories for the conformal calibration split $\bcal^{\cs}$, and $350$  trajectories as evaluation runs. Each evaluation run plays the role of $\testrun$ with its labels withheld while the stopping rule is applied. Its correctness labels are then used only to compute stopped FDP.

The round-wise $e$-values use the feature $\log S_t$ and a roundwise likelihood-ratio estimate based on one-dimensional Gaussian kernel density estimation. Online $e$-BH uses the uniform schedule $\gamma_t=1/T$ with $\kappa=1$. We compare Robust-Core CRC, Envelope CRC, and the unscaled online $e$-BH. Robust-Core CRC selects over the uniform grid $\{0,1,\ldots,16\}$. Envelope CRC is optimized over the continuous interval $[0,16]$ using the changepoints of the online $e$-BH loss. The reported loss is the FDP at stopping, averaged over the $350$ evaluation runs. Bands in Figure~\ref{fig:sec6-endtoend} are the mean $\pm 2$ standard errors over the $20$ repetitions of the experiment at $\alpha\in\{0.10,0.15,0.20,0.25,0.30\}$.

Panel C of Figure~\ref{fig:sec6-endtoend} plots FDP at stopping against the target level $\alpha$. The diagonal $y=\alpha$ marks a procedure that spends its error budget exactly. Online $e$-BH sits far below the diagonal because it fixes $\lambda=1$. At $\alpha=0.10$, Robust-Core CRC and Envelope CRC have similar mean FDPs of $0.082$ and $0.085$. Robust-Core CRC is closer to the diagonal for larger $\alpha$ levels but the difference between Envelope CRC and Robust-Core CRC is very minor at smaller $\alpha$ values. 

\section{Experiments}\label{sec:experiments}

\begin{figure}[htbp]
  \centering
  \includegraphics[width=\linewidth]{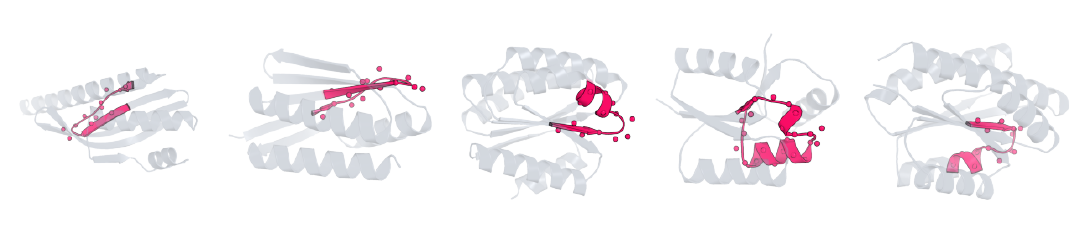}
  \caption{\textbf{Motif scaffolding examples.} Each column shows a MotifBench-derived
    task after scaffolding. The fixed motif is magenta and the designed scaffold is
  grey.}
  \label{fig:protein-motif}
\end{figure}

To illustrate the utility of our approach in a more realistic scenario, we consider a sequential protein-design setting, where the generator adapts to a cheap verifier. Informally, a protein generative model proposes a set of particles in $\rset^3$ representing the positions of the protein's backbone atoms. In motif scaffolding, some of these particles form a fixed functional motif, which the model aims to surround with a supporting section of the protein, also known as the scaffold. Our aim is to determine whether the proposed three-dimensional structure is designable. A three-dimensional structure is designable if there exists an amino-acid sequence that plausibly folds into it.

A common computational test of designability compares the proposed protein backbone with the result of a sequence-design-and-refolding procedure~\citep{zheng2025motifbench}. First, a generative model proposes a three-dimensional structure of the protein backbone.  Then, an inverse-folding model predicts amino-acid sequences that could produce the protein backbone proposed by the generative model. A structure-prediction model, such as AlphaFold3 \citep{abramson2024accurate}, then folds each predicted sequence independently, producing another set of structures in $\rset^3$. If the refolded structure closely matches the original proposal, this provides evidence that the proposed protein backbone is physically plausible in the sense that it can be realized by an amino-acid sequence.

\begin{figure}[htbp]
  \centering
  \includegraphics[width=\linewidth]{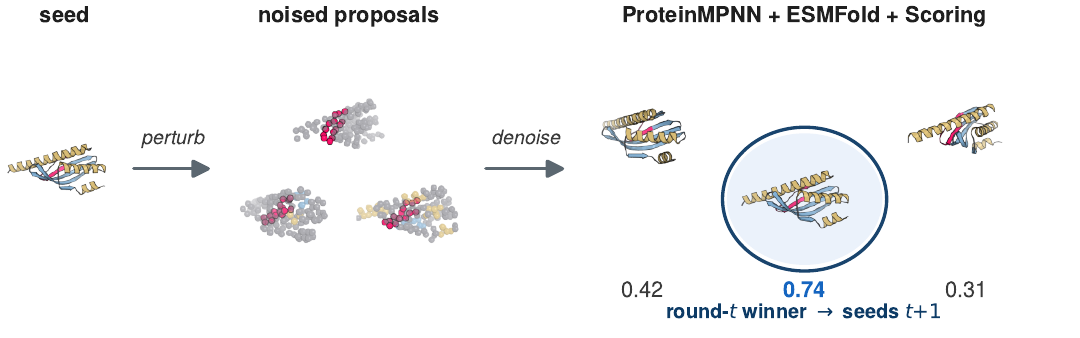}
  \caption{\textbf{One round of the protein feedback loop.} Four exploit candidates continue from the stored midpoint state associated with the previous winner, while four explore candidates start from the original motif specification. The cheap ProteinMPNN--ESMFold pipeline scores the resulting proposals, and the midpoint state associated with the highest-scoring proposal seeds round $t{+}1$.}
  \label{fig:protein-loop}
\end{figure}

We measure structural agreement using root-mean-square deviation (RMSD). For each comparison, we rigidly align the relevant regions of the proposed and refolded structures. RMSD is the square root of the average squared distance between corresponding atoms after alignment, so smaller values indicate closer structural agreement. The score uses three RMSD terms. $\mathrm{RMSD}_{\mathrm{bb\text{-}ESM}}$ compares the complete proposed protein backbone with the refolded one. $\mathrm{RMSD}_{\mathrm{scaffold}}$ measures agreement across the generator's recorded scaffold region. $\mathrm{RMSD}_{\mathrm{motif}}$ measures how well the refolded structure preserves the prescribed motif geometry. Appendix~\ref{app:protein-feedback} specifies the exact atoms and structural masks used for each term. The score also includes $\mathrm{pLDDT}_{\mathrm{mean}}$, the mean predicted local distance difference test score, and $\mathrm{pTM}$, the predicted Template Modeling score. Larger $\mathrm{pLDDT}_{\mathrm{mean}}$ and  $\mathrm{pTM}$ values indicate greater confidence in the local geometry and overall fold, respectively.

We compute these measurements using a cheap verification pipeline. The Protein Message Passing Neural Network (ProteinMPNN)~\citep{dauparas2022robust} predicts eight amino-acid sequences for each proposed protein backbone. ESMFold~\citep{lin2023evolutionary}, a computationally light structure-prediction model, folds each sequence independently and therefore produces eight refolded structures. For each sequence--refold pair, the cheap verifier computes the three RMSDs, mean pLDDT, and pTM, then combines them into an individual proxy score
\begin{equation*}
  S
  =
  \mathrm{pLDDT}_{\mathrm{mean}}
  + 100\,\mathrm{pTM}
  - 10\,\mathrm{RMSD}_{\mathrm{bb\text{-}ESM}}
  - 5\,\mathrm{RMSD}_{\mathrm{scaffold}}
  - 20\,\mathrm{RMSD}_{\mathrm{motif}} .
\end{equation*}

We note that this score function was not particularly tuned to correlate with a specific notion of ground-truth correctness. Rather, the weighting of the different terms was heuristically selected to balance the scale and importance of different terms. The first two terms reward refolds that ESMFold considers reliable, while the RMSD terms penalize disagreement with the intended structure. Motif disagreement receives the largest penalty because preserving the prescribed motif geometry is the central constraint of the task.

\begin{figure}[htbp]
  \centering
  \includegraphics[width=\linewidth]{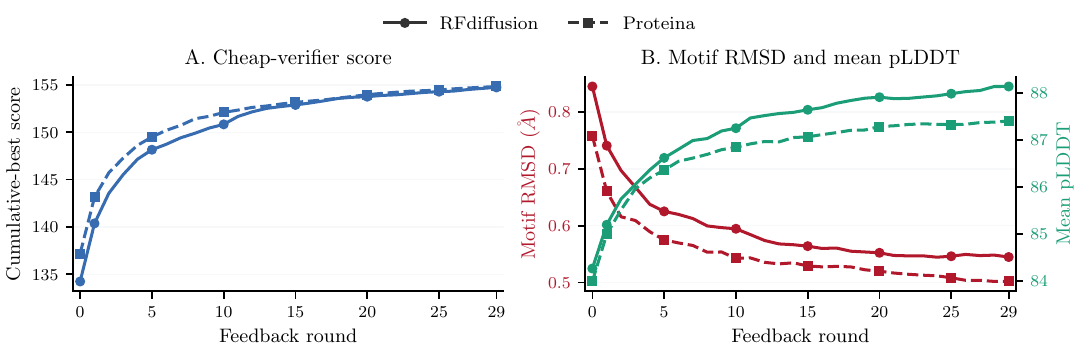}
  \caption{\textbf{Cheap-verifier trends across rounds.}
    At each round, we retain the design with the highest cheap-verifier score seen
    in each trajectory. Solid circles denote RFdiffusion and dashed squares denote Proteina. Panel A reports cumulative-best score. Panel B reports ESMFold motif RMSD on the left axis and mean pLDDT on the right axis.}
  \label{fig:protein-verifier-trends}
\end{figure}

We use this cheap verification score inside a 30-round feedback loop with RoseTTAFold diffusion (RFdiffusion)~\citep{watson2023novo} and Proteina~\citep{geffner2025proteina}. Both models generate structures by progressively transforming a noisy intermediate state into three-dimensional coordinates. During each trajectory, we save the generator's partially noised state at the midpoint. The first round contains eight fresh candidates from the original motif specification. Every later round contains four fresh explore candidates and four exploit candidates. The explore candidates are generated from scratch while the exploit candidates are generated from the midpoint state corresponding to the previous best-scoring structure. Appendix~\ref{app:protein-feedback} gives the detailed description of the feedback structure.

Figure~\ref{fig:protein-verifier-trends} shows rapid gains during the early feedback rounds followed by a gradual plateau for both generators. The score curves are nondecreasing by construction because each trajectory retains its best scoring candidate by design. Motif RMSD generally decreases over the same rounds, while mean pLDDT increases. Proteina reaches a lower motif RMSD, whereas RFdiffusion attains a higher mean pLDDT. These trends show that feedback finds proposals with better motif preservation and higher confidence, as scored through the cheap verification. Thus, as rounds pass, there is a risk of favoring proposals well-scored by the cheap verifier rather than properties associated with ground-truth correctness.

\begin{figure}[htbp]
  \centering
  \includegraphics[width=\linewidth]{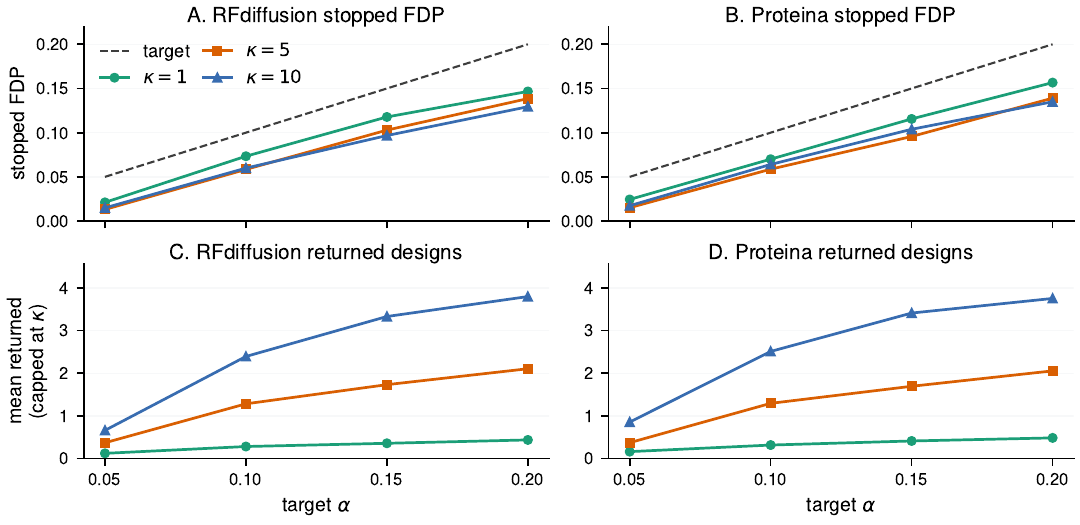}
  \caption{\textbf{Stopped risk and utility on held-out trajectories.}
    The robust-core calibrated online $e$-BH rule is evaluated over target levels
    $\alpha\in\{0.05,0.10,0.15,0.20\}$ and target acceptance counts
    $\kappa\in\{1,5,10\}$. Panels A and B report stopped FDP for RFdiffusion
    and Proteina. Panels C and D report returned designs, with counts capped at
  $\kappa$, for the same two generators.}
  \label{fig:protein-results}
\end{figure}

For ground-truth correctness, we use AlphaFold~3~\citep{abramson2024accurate} as a more computationally expensive oracle. A selected proposal is successful when the AlphaFold~3 prediction for its associated amino-acid sequence, predicted by ProteinMPNN, matches the prescribed motif within 0.75~\AA{} RMSD, matches the proposed backbone within 1.5~\AA{} C$\alpha$ RMSD over the scaffold mask, has pTM at least $0.80$, and has mean pLDDT at least $85$. We also note that these criteria were heuristically selected and the pipeline may be reimplemented with different metrics for ground truth correctness.

We generated $300$ labeled trajectories for each of Proteina and RFdiffusion. A single trajectory is a $30$-round feedback run for a fixed motif-scaffolding task. For each held-out trajectory, the remaining $299$ trajectories from the same generator are split into $199$ for $e$-value construction and $100$ for conformal calibration. The stopping rule is replayed on the held-out trajectory with its AlphaFold~3 labels withheld. Stopped FDP and the number of returned designs are scored afterward. Additional details are provided in Appendix~\ref{app:experiments}.

Figure~\ref{fig:protein-results} shows the stopped FDP of the full procedure. In Panels A and B, stopped FDP increases as $\alpha$ increases, while the curves for different values of $\kappa$ remain close. Thus, allowing more accepted designs does not substantially increase the observed error rate. Panels C and D show the corresponding increase in utility, with larger values of both $\alpha$ and $\kappa$ returning more designs.

\section{Conclusions}\label{sec:discussion}

In this paper, we study how to safely stop generator--verifier loops that use a cheap verifier to scale inference-time computation. While repeated verifier feedback allows the generator to efficiently explore many proposals without repeatedly querying an expensive oracle, it can also bias the search towards samples that are well-scored by the verifier but not by the ground-truth oracle. In our proposed approach, we cast this setting as an online multiple-testing problem and combine index-betting $e$-values, robust-core conformal risk control, and online $e$-BH to solve it. The resulting procedure controls stopped FDR up to the $\kappa$-acceptance time while permitting arbitrary dependence within a run. Our synthetic experiments validate the statistical construction, and the protein-design study shows how it can be applied to a more realistic setting.

We foresee two possible extensions of our approach. First, the current guarantee requires the logged and deployed runs to be exchangeable or even i.i.d. This requirement remains a limitation, as in many real-world examples data arrives sequentially and may not be exchangeable. A weighted form of conformal risk control~\citep{prinster2026conformal}, inspired by weighted conformal prediction under covariate shift~\citep{tibshirani2019conformal} and more general distribution shifts~\citep{barber2023conformal,prinster2024conformal}, could extend our results to the non-exchangeable setting. Nonetheless, combining weighted conformal prediction with our non-monotone construction remains an open problem. Second, many inference-time systems organize proposals as a tree or directed acyclic graph rather than a single trajectory. Extending our framework from looped trajectories to more adaptive tree or graph construction would require FDR control under adaptive branching, pruning, and merging of different proposed solutions.  

\section*{Acknowledgements}

We thank Drew Prinster for a careful reading of an earlier version and for detailed feedback and discussion, which improved the presentation and content of this manuscript. The work of Aymeric Dieuleveut and Mahmoud Hegazy is supported by French State aid managed by the Agence Nationale de la Recherche (ANR) under the France 2030 program with the reference ANR-23-PEIA-005 (REDEEM project), and ANR-23-IACL-0005, in particular the Hi!Paris FLAG chair. This project is part of the Chair ``Markets and Learning,'' supported by Air Liquide, BNP PARIBAS ASSET MANAGEMENT Europe, EDF, Orange and SNCF, sponsors of the Inria Foundation. We also acknowledge access to the HPC resources of IDRIS under the allocations AD011017341 and A0201017541 by GENCI. Finally, the work of Michael Jordan is supported by the European Union (ERC-2022-SYG-OCEAN-101071601).
Views and opinions expressed are however those of the authors only and do not necessarily reflect those of the European Union or the European Research Council Executive Agency. Neither the European Union nor the granting authority can be held responsible for them. 

\bibliographystyle{plainnat} \bibliography{ref}

\clearpage \appendix
\section{Deferred Proofs}\label{app:deferred-proofs}\label{app:sec4-proofs}

Throughout the proof subsections for Section~\ref{sec:marginal}, $n$ denotes the number of logged null windows. Accordingly, the augmented null batch has size $n+1$, which corresponds to $n_{0,t}$ in the main text. We fix $t$ and write $P\coloneqq P_{0,t}$, $Q\coloneqq P_{1,t}$, and $L\coloneqq \mathrm dQ/\mathrm dP$. 

\subsection{Proposition~\ref{prop:kelly-lr}}
\begin{proof}
  All probabilities and expectations below are under the alternative. By the i.i.d.~assumption of Section~\ref{sec:overview}, $Z_0\sim Q$ is independent of $Z_1,\ldots,Z_n\overset{\mathrm{iid}}{\sim}P$. Let $\Pi$ be a uniformly random permutation of $\{0,\ldots,n\}$, independent of $(Z_0,\ldots,Z_n)$. Write
  \begin{equation*}
    \widetilde Z_k\coloneqq Z_{\Pi(k)},
    \qquad
    J^\star\coloneqq\Pi^{-1}(0).
  \end{equation*}
  Thus $\widetilde Z=(\widetilde Z_0,\ldots,\widetilde Z_n)$ is the ordered batch and $J^\star$ is the position occupied by $Z_0$. Since $J^\star$ is a function of $\Pi$ alone, it is uniform on $\{0,\ldots,n\}$ and independent of $(Z_0,\ldots,Z_n)$.

Hence, conditional on $J^\star=j$, the different coordinates of $\widetilde Z$ are independent, with $\widetilde Z_j\sim Q$ and $\widetilde Z_\ell\sim P$ for every $\ell\neq j$. Bayes' rule gives
  \begin{equation*}
    \Pr(J^\star=j\mid\widetilde Z)
    =
    \frac{
      \Pr(\widetilde Z\mid J^\star=j)\Pr(J^\star=j)
    }{
      \sum_{\ell=0}^n
      \Pr(\widetilde Z\mid J^\star=\ell)\Pr(J^\star=\ell)
    }
    =
    \frac{L(\widetilde Z_j)/(n+1)}
    {\sum_{\ell=0}^n L(\widetilde Z_\ell)/(n+1)}
    =
    \frac{L(\widetilde Z_j)}
    {\sum_{\ell=0}^n L(\widetilde Z_\ell)},
    \qquad j=0,\ldots,n,
  \end{equation*}
  almost surely.

  Consider $q\in\rset^{n+1}$ such that $q_j = \Pr(J^\star=j\mid\widetilde Z)$. For any $(Z_0,\ldots, Z_n)$, any maximizing budgeted strategy $\beta$ must satisfy $\beta_j>0$ whenever $q_j>0$, since otherwise its conditional expected log-wealth is $-\infty$ and satisfy $\sum_{j=0}^n\beta_j=1$ (otherwise rescaling increases the conditional expected log-wealth). Thus it remains to optimize over the simplex, where
  \begin{equation*}
    \esp\!\left[\log\!\big((n+1)\beta_{J^\star}\big)\mid \widetilde Z\right]
    =
    \log(n+1)
    +
    \sum_{j=0}^n q_j\log q_j
    -
    D_{\mathrm{KL}}(q\|\beta).
  \end{equation*}
  Nonnegativity of the KL divergence shows that the unique maximizer is $\beta=q$. Substituting the posterior above yields
  \begin{equation*}
    \beta_j^{\mathrm{LR}}(\widetilde Z)
    =
    \frac{L(\widetilde Z_j)}
    {\sum_{\ell=0}^n L(\widetilde Z_\ell)},
    \qquad j=0,\ldots,n.
  \end{equation*}
  This rule is batch-symmetric, so it also maximizes conditional expected log-wealth within the batch-symmetric class.
\end{proof}

\subsection{Theorem~\ref{thm:power-decomp}}
\begin{proof}
  Let $Z\sim Q$ and let $Z_1^0,\ldots,Z_n^0\overset{\mathrm{iid}}{\sim}P$ be independent of $Z$. Write $L_0\coloneqq L(Z)$, $\widehat L_0\coloneqq \widehat L_{-0}(Z)$, $L_j\coloneqq L(Z_j^0)$, and $\widehat L_j\coloneqq \widehat L_{-j}(Z_j^0)$ for $1\le j\le n$. Then
  \begin{equation*}
    \log E^{\mathrm{idx}}
    =
    \log(n+1)+\log L_0-\log\!\Big(\sum_{j=0}^n L_j\Big),
  \end{equation*}
  and similarly
  \begin{equation*}
    \log \widehat E^{\mathrm{idx}}
    =
    \log(n+1)+\log \widehat L_0-\log\!\Big(\sum_{j=0}^n \widehat L_j\Big).
  \end{equation*}
  Hence
  \begin{equation*}
    \left|\log E^{\mathrm{idx}}-\log \widehat E^{\mathrm{idx}}\right|
    \le
    \left|\log L_0-\log \widehat L_0\right|
    +
    \left|
    \log\!\Big(\sum_{j=0}^n L_j\Big)-\log\!\Big(\sum_{j=0}^n \widehat L_j\Big)
    \right|.
  \end{equation*}
  Set
  \begin{equation*}
    m\coloneqq \max_{0\le j\le n}\left|\log L_j-\log \widehat L_j\right|.
  \end{equation*}
  Then $e^{-m}\widehat L_j\le L_j\le e^m \widehat L_j$ for every $j$, so after summing,
  \begin{equation*}
    e^{-m}\sum_{j=0}^n \widehat L_j
    \le
    \sum_{j=0}^n L_j
    \le
    e^m\sum_{j=0}^n \widehat L_j.
  \end{equation*}
  Therefore
  \begin{equation*}
    \left|
    \log\!\Big(\sum_{j=0}^n L_j\Big)-\log\!\Big(\sum_{j=0}^n \widehat L_j\Big)
    \right|
    \le m,
  \end{equation*}
  and the claim follows.
\end{proof}

\subsection{Theorem~\ref{thm:trunc-power-gap}}
\begin{proof}
  Let $Z\sim Q$, let $Z_1^0,\ldots,Z_n^0\overset{\mathrm{iid}}{\sim}P$, independent of $Z$, and set $L_0\coloneqq L(Z)$, $L_j\coloneqq L(Z_j^0)$, and $S_n\coloneqq \sum_{j=1}^n L_j$. Since
  \begin{equation*}
    E^{\mathrm{idx}}=(n+1)\frac{L_0}{L_0+S_n},
  \end{equation*}
  we have
  \begin{equation*}
    \Delta_n
    \coloneqq
    \log L_0-\log E^{\mathrm{idx}}
    =
    \log\!\Big(\frac{L_0+S_n}{n+1}\Big).
  \end{equation*}

  Condition on $L_0$. By Jensen's inequality,
  \begin{equation*}
    \mathbb E[\Delta_n\mid L_0]
    \le
    \log\!\Big(\frac{L_0+\mathbb E_P[S_n]}{n+1}\Big)
    =
    \log\!\Big(1+\frac{L_0-1}{n+1}\Big),
  \end{equation*}
  because $\mathbb E_P[L]=1$, hence $\mathbb E_P[S_n]=n$.

  Since $u\mapsto\log(1+u)$ is increasing and $(L_0-1)/(n+1)\ge -1$,
  \begin{equation*}
    \log\!\Big(1+\frac{L_0-1}{n+1}\Big)
    \le
    \log\!\Big(1+\frac{(L_0-1)_+}{n+1}\Big).
  \end{equation*}
  Set $C_\delta\coloneqq\sup_{x>0}\log(1+x)/x^\delta$. By the definition of $C_\delta$,
  \begin{equation*}
    \log(1+x)\le C_\delta x^\delta,\qquad x\ge0.
  \end{equation*}
  Applying this with $x=(L_0-1)_+/(n+1)$ yields
  \begin{equation*}
    \mathbb E[\Delta_n\mid L_0]
    \le
    \frac{C_\delta (L_0-1)_+^\delta}{(n+1)^\delta}.
  \end{equation*}
  Taking expectation under $Q$ and using $\mathbb E_{1}[g(L_0)]=\mathbb E_{P}[L\,g(L)]$,
  \begin{equation*}
    \mathbb E_{1}[\Delta_n]
    \le
    \frac{C_\delta\,\mathbb E_{1}[(L_0-1)_+^\delta]}{(n+1)^\delta}
    =
    \frac{C_\delta\,\mathbb E_{P}[L(L-1)_+^\delta]}{(n+1)^\delta}
    \le
    \frac{C_\delta\,\mathbb E_{P}[L^{1+\delta}]}{(n+1)^\delta}.
  \end{equation*}
\end{proof}

\subsection{Theorem~\ref{thm:lower-bound}}
\begin{proof}
  Fix $a=\frac{B-1}{2}$. For each sufficiently large $n$, let $L$ have the two-point law
  \begin{equation*}
    P(L=n)=p\coloneqq a n^{-(1+\delta)},
    \qquad
    P(L=y)=1-p,
    \qquad
    y\coloneqq\frac{1-np}{1-p}.
  \end{equation*}
  Then $0<y<1$ and $\esp_P[L]=1$, so $\mathrm dQ=L\,\mathrm dP$ defines a probability distribution. Moreover,
  \begin{equation*}
    \esp_P[L^{1+\delta}]
    =
    a+y^{1+\delta}(1-p)
    \longrightarrow
    a+1
    =
    \frac{B+1}{2}
    <B.
  \end{equation*}
  Thus $(P,Q)\in\mathcal{P}_{1+\delta}(B)$ for all sufficiently large $n$. Under $Q$,
  \begin{equation*}
    Q(L=n)=np=a n^{-\delta},
    \qquad
    Q(L=y)=1-np.
  \end{equation*}

  Let $L_0\sim Q$ and $L_1,\ldots,L_n\overset{\mathrm{iid}}{\sim}P$ be independent, and let $H$ count how many of $L_0,\ldots,L_n$ equal $n$. The average $(L_0+\cdots+L_n)/(n+1)$ equals $y$ when $H=0$ and $n(1+y)/(n+1)$ when $H=1$. It exceeds one when $H\ge2$ and $n$ is large. Therefore,
  \begin{equation*}
    \begin{split}
      \esp_1\!\left[\log L_0-\log E^{\mathrm{idx}}\right]
      &=
      \esp_1\!\left[\log\frac{L_0+\cdots+L_n}{n+1}\right]\\
      &\ge
      (1-np)(1-p)^n\log y\\
      &\quad+
      \left[np(1-p)^n+(1-np)np(1-p)^{n-1}\right]
      \log\!\left(\frac{n(1+y)}{n+1}\right).
    \end{split}
  \end{equation*}
  Since $\delta\in(0,1)$,
  \begin{equation*}
    \log y=-a n^{-\delta}+O(n^{-2\delta}),
    \qquad
    (1-p)^n=1-a n^{-\delta}+O(n^{-2\delta}),
    \qquad
    \log\!\left(\frac{n(1+y)}{n+1}\right)
    =\log 2+O(n^{-\delta}).
  \end{equation*}
  Substitution gives
  \begin{equation*}
    \esp_1\!\left[\log L_0-\log E^{\mathrm{idx}}\right]
    \ge
    a(2\log 2-1)n^{-\delta}+O(n^{-2\delta}).
  \end{equation*}
  Since $2\log 2-1>0$, the right-hand side is at least $\frac{a}{2}(2\log 2-1)n^{-\delta}$ for all sufficiently large $n$. Finally, Proposition~\ref{prop:kelly-lr} shows that $E^{\mathrm{idx}}$ minimizes this gap among budgeted batch-symmetric strategies, so the same lower bound applies to every such strategy.
\end{proof}

\subsection{Corollary~\ref{cor:kl-dominance}}
\begin{proof}
  Let $Z_0,\ldots,Z_n\overset{\mathrm{iid}}{\sim}P$, set $L_j\coloneqq L(Z_j)$ for $0\le j\le n$, and define
  \begin{equation*}
    A_n\coloneqq \frac{1}{n+1}\sum_{j=0}^n L_j.
  \end{equation*}
  By change of measure and exchangeability,
  \begin{align*}
    \mathbb E_{1}\!\left[\log L_0-\log E^{\mathrm{idx}}\right]
    &=
    \mathbb E_{P}^{\otimes(n+1)}\!\left[L_0\log A_n\right] \\
    &=
    \frac{1}{n+1}\sum_{j=0}^n
    \mathbb E_{P}^{\otimes(n+1)}\!\left[L_j\log A_n\right] \\
    &=
    \mathbb E_{P}^{\otimes(n+1)}\!\left[A_n\log A_n\right].
  \end{align*}

  Since $\mathbb E_{P}[L]=1$, the strong law of large numbers gives $A_n\to1$ almost surely, hence $A_n\log A_n\to0$ almost surely.

  Now let $\phi(x)\coloneqq x\log x$, with $0\log0=0$. The function $\phi$ is convex and bounded below by $-e^{-1}$. By Jensen,
  \begin{equation*}
    \phi(A_n)
    \le
    \frac{1}{n+1}\sum_{j=0}^n \phi(L_j),
  \end{equation*}
  and hence
  \begin{equation*}
    \phi_+(A_n)
    \le
    \frac{1}{n+1}\sum_{j=0}^n \phi_+(L_j).
  \end{equation*}
  Since $\mathbb E_P[\phi_+(L)]<\infty$, the empirical averages $(n+1)^{-1}\sum_{j=0}^n \phi_+(L_j)$ are uniformly integrable. Therefore $\{\phi_+(A_n)\}_{n\ge0}$ is uniformly integrable and $\{\phi(A_n)\}_{n\ge0}$ is uniformly integrable. Since $\phi(A_n)\to0$ almost surely, it follows that
  \begin{equation*}
    \mathbb E[A_n\log A_n]\to 0.
  \end{equation*}
\end{proof}

\subsection{Theorem~\ref{thm:tree-crc-valid}}\label{app:sec5-proofs}
\begin{proof}
Let $C\coloneqq G(\ell_1,\ldots,\ell_{n+1})$. Since the true loss profile $\ell_{n+1}$ belongs to $\mathcal{L}$ almost surely, it is one of the completions in the intersection defining the robust core. Hence $C_n\subseteq C$.

If $C_n\neq\emptyset$, the selected parameter belongs to $C$, so its test loss is bounded by $L_{n+1}(C)$. If $C_n=\emptyset$, the fallback incurs zero loss by Equation~(\ref{eq:safe-choice}). Nonnegativity of the set loss, with $L_{n+1}(\emptyset)=0$, therefore gives
\begin{equation*}
  \ell_{n+1}(\widehat{\lambda}_{\mathrm{core}})
  \le L_{n+1}(C)
\end{equation*}
almost surely.

Symmetry of $G$ and exchangeability of the loss profiles imply that $L_1(C),\ldots,L_{n+1}(C)$ are identically distributed. Taking expectations and using Equation~(\ref{eq:abstract-feasibility}),
\begin{equation*}
  \begin{aligned}
    \esp[\ell_{n+1}(\widehat{\lambda}_{\mathrm{core}})]
    &\le \esp[L_{n+1}(C)] \\
    &= \esp\left[\frac{1}{n+1}\sum_{i=1}^{n+1}L_i(C)\right]
    \le \alpha.
  \end{aligned}
\end{equation*}
\end{proof}

\subsection{Lemma~\ref{lem:crc-tree-correct}}
\begin{proof}
Consider the full scan without early termination. Suppose the frontier before processing $\lambda_k$ represents all attainable pairs of calibration maxima $M$ and held-out maxima $u$, with each state $(M,[a,b])$ representing $\{M\}\times[a,b]$. Initially, the only attainable pair is $(0^n,0)$. The unprocessed coordinates of $w$ remain unrestricted because $\mathcal{U}=[0,1]^\Lambda$.

For a state $(M,[a,b])$, let $M'$ and $\Theta$ be defined as in Algorithm~\ref{alg:crc-tree}. Inclusion requires $\max\{u,w(\lambda_k)\}\le\Theta$. This is possible if and only if $a\le\Theta$, and the new held-out maximum ranges over $[a,\min\{1,\Theta\}]$. Every value $r$ in this interval is attained by taking $u=a$ and $w(\lambda_k)=r$, giving the successor $(M',[a,\min\{1,\Theta\}])$.

Skipping is possible if and only if $\Theta<1$. Choosing $w(\lambda_k)=1$ then produces a skip for every $u\in[a,b]$, leaving the state $(M,[a,b])$ unchanged. Merging identical successor states changes neither the attainable pairs nor the freedom to choose the remaining coordinates.

Thus $\mathrm{AllInclude}$ is true at $\lambda_k$ if and only if every completion includes that point. Later decisions cannot remove it, so this condition is equivalent to membership in the robust core. Since the scan proceeds from most to least preferred, it returns the largest member of the core, or the fallback when no point belongs to it.
\end{proof}

\subsection{Proposition~\ref{prop:split-loss-exchangeable} and Theorem~\ref{thm:end-to-end}}\label{app:end-to-end-proof}
\begin{proof}
  Work conditionally on the $e$-value construction split $\bcal^{\es}$ and fix $\kappa\ge 1$. Independence of the conformal calibration runs and the unobserved run from $\bcal^{\es}$ preserves their exchangeability under this conditioning. With $\bcal^{\es}$ fixed, the construction in Sections~\ref{subsec:uncalibrated-rule} and~\ref{subsec:robust-core-calibration} applies the same map to every such run. Any permutation of the $n_{\cs}$ conformal calibration runs and the unobserved run therefore permutes the local $e$-value streams in the same way. For each $\lambda\in\Lambda_0$, the online $e$-BH path, the stopping time, and the process loss are deterministic functions of the corresponding $e$-value stream and run labels. The same permutation consequently relabels $\ell_1,\ldots,\ell_{n_{\cs}},\ell$. Thus, they are conditional exchangeable. Integrating over $\bcal^{\es}$ gives marginal exchangeability and completes the proof of Proposition~\ref{prop:split-loss-exchangeable}.

  By Lemma~\ref{lem:crc-tree-correct}, Algorithm~\ref{alg:crc-tree} returns the robust-core estimator from Equation~(\ref{eq:robust-core-estimator}) when applied to the conformal calibration profiles $\ell_1,\ldots,\ell_{n_{\cs}}$. Since the fallback satisfies $\ell_i(0)=0$ for $i=1,\ldots,n_{\cs}$ and $\ell(0)=0$, Theorem~\ref{thm:tree-crc-valid} applies conditionally on $\bcal^{\es}$ and yields
  \begin{equation*}
    \esp\left[
      \ell(\widehat{\lambda}_{\mathrm{core}})
      \middle| \bcal^{\es}
    \right]
    \le \alpha.
  \end{equation*}
  Taking expectations over $\bcal^{\es}$ removes the conditioning and gives Equation~(\ref{eq:end-to-end-guarantee}). For any unobserved-run stopping time $\tau\le\tau_{\kappa}(\widehat{\lambda}_{\mathrm{core}})$,
  \begin{equation*}
    \mathrm{FDP}
    (R_{\tau}(\widehat{\lambda}_{\mathrm{core}}))
    \le
    \ell(\widehat{\lambda}_{\mathrm{core}}).
  \end{equation*}
\end{proof}

\section{Additional Experiment Details}\label{app:experiments}

For each of the 300 motif-scaffolding targets, each generator produces a 30-round trajectory, and AlphaFold~3 evaluation takes place after generation.

\subsection{MotifBench-derived benchmark construction} \label{app:biolip}

We restrict ourselves to tasks whose motif is a single contiguous segment and use the ten tasks of Table~\ref{tab:protein-targets}, taken from MotifBench \citep{zheng2025motifbench}. Each MotifBench task specifies the fixed motif and integer ranges for the scaffold lengths on its left and right sides. For each task, we sample 30 distinct integer pairs of left and right scaffold lengths within the specified ranges, which gives $10\times 30=300$ targets. We use the same 300 targets for RFdiffusion and Proteina.

\begin{table}[H]
  \centering
  \small
  \begin{tabular}{llccc}
    \toprule
    Task & Motif residues & Motif length & Left range & Right range \\
    \midrule
    01\_1LDB & A186--206 & 21 & 0--100 & 0--100 \\
    02\_1ITU & A124--147 & 24 & 0--100 & 0--100 \\
    03\_2CGA & A184--194 & 11 & 0--100 & 0--100 \\
    04\_5WN9 & A170--189 & 20 & 10--40 & 10--40 \\
    05\_5ZE9 & A229--243 & 15 & 0--100 & 0--100 \\
    06\_6E6R & A23--35 & 13 & 0--65 & 0--65 \\
    07\_6E6R & A23--35 & 13 & 0--190 & 0--190 \\
    08\_7AD5 & A99--113 & 15 & 0--100 & 0--100 \\
    09\_7CG5 & A6--20 & 15 & 0--100 & 0--100 \\
    10\_7WRK & A80--94 & 15 & 0--100 & 0--100 \\
    \bottomrule
  \end{tabular}
  \caption{MotifBench source tasks used to construct the 300 targets, each fixing one pair of left and right scaffold lengths.}
  \label{tab:protein-targets}
\end{table}

\subsection{Protein feedback loop} \label{app:protein-feedback}

A feedback trajectory contains 30 rounds with eight candidate backbones per round. The first round generates all eight backbones from the motif specification, while each later round combines four exploit backbones sampled from the preceding winner's midpoint state with four explore backbones sampled from fresh trajectories.

ProteinMPNN~\citep{dauparas2022robust} samples eight sequences per backbone at temperature $0.1$ with no backbone noise. It uses the full N, C$\alpha$, C, and O backbone for RFdiffusion and C$\alpha$-only geometry for Proteina. ESMFold~\citep{lin2023evolutionary} refolds every sequence with two recycles. The cheap verifier computes the proxy score $S$ of Section~\ref{sec:experiments} for each of the 64 sequence--refold pairs, and we select the highest-scoring pair, which determines the winning backbone and sequence. The midpoint state of the winning backbone seeds the exploit candidates of the next round, and the winning sequence is kept for the offline AlphaFold~3 evaluation. Algorithm~\ref{alg:protein-feedback} summarizes this feedback procedure.

\begin{algorithm}[H]
  \caption{Protein feedback and offline evaluation}
  \label{alg:protein-feedback}
  \begin{algorithmic}[1]
    \REQUIRE Motif task $m$, generator $G$, rounds $T=30$, backbones per round $B=8$, sequences per backbone $Q=8$
    \ENSURE Winners $\{(x_t^\star,q_t^\star,S_t^\star)\}_{t=1}^{T}$ and oracle labels $\{Y_t\}_{t=1}^{T}$
    \FOR{$t=1,\ldots,T$}
      \IF{$t=1$}
        \STATE Sample $B$ fresh backbone--midpoint pairs $(x_j,a_j)$ from $G(m)$
      \ELSE
        \STATE Sample $B/2$ backbones $x_j$ by independent continuations from $a$, setting $a_j=a$
        \STATE Sample the remaining $B/2$ backbone--midpoint pairs afresh from $G(m)$
      \ENDIF
      \FOR{$j=1,\ldots,B$}
        \STATE Sample $Q$ sequences $q_{j,1},\ldots,q_{j,Q}$ from ProteinMPNN conditioned on $x_j$
        \FOR{$k=1,\ldots,Q$}
          \STATE Refold $q_{j,k}$ with ESMFold and compute its verifier score $S_{j,k}$
        \ENDFOR
      \ENDFOR
      \STATE Select $(j^\star,k^\star)\in\arg\max_{j,k}S_{j,k}$
      \STATE $(x_t^\star,q_t^\star,S_t^\star)\leftarrow(x_{j^\star},q_{j^\star,k^\star},S_{j^\star,k^\star})$
      \STATE $a\leftarrow a_{j^\star}$
    \ENDFOR
    \FOR{$t=1,\ldots,T$}
      \STATE Run AlphaFold~3 on $q_t^\star$ and compute $Y_t$ against motif $m$ and backbone $x_t^\star$
    \ENDFOR
  \end{algorithmic}
\end{algorithm}

\paragraph{Model hyperparameters.}
\begin{itemize}[leftmargin=*,itemsep=0.4em]
  \item \textbf{RFdiffusion configuration.} We use RFdiffusion~\citep{watson2023novo}.
    \begin{itemize}[leftmargin=1.5em,nosep]
      \raggedright
      \item \textbf{Code.} We use the \href{https://github.com/RosettaCommons/RFdiffusion}{official RFdiffusion GitHub repository}.
      \item \textbf{Checkpoint.} \path{Base_ckpt.pt}.
      \item \textbf{Reverse process.} \path{diffuser.T}=50 and \path{inference.final_step}=1. We retain the step-25 state for subsequent exploit samples.
      \item \textbf{Translations.} \path{diffuser.schedule_type}=\texttt{linear}, \path{diffuser.b_0}=0.01, \path{diffuser.b_T}=0.07, and \path{diffuser.crd_scale}=0.25.
      \item \textbf{Rotations.} \path{diffuser.so3_type}=\texttt{igso3}, \path{diffuser.so3_schedule_type}=\texttt{linear}, \path{diffuser.min_b}=1.5, \path{diffuser.max_b}=2.5, \path{diffuser.min_sigma}=0.02, and \path{diffuser.max_sigma}=1.5.
      \item \textbf{Conditioning.} \path{inference.model_runner}=\texttt{SelfConditioning}, \path{inference.align_motif}=\texttt{True}, \path{inference.recenter}=\texttt{True}, and \path{preprocess.motif_sidechain_input}=\texttt{True}.
    \end{itemize}
  \item \textbf{Proteina configuration.} We use Proteina~\citep{geffner2025proteina}.
    \begin{itemize}[leftmargin=1.5em,nosep]
      \raggedright
      \item \textbf{Code.} We use the \href{https://github.com/NVIDIA-Digital-Bio/proteina}{official Proteina GitHub repository}.
      \item \textbf{Checkpoint.} \path{ckpt_name}=\path{proteina_v1.7_DFS_60M_notri_motif_scaffolding.ckpt}.
      \item \textbf{Integration.} \path{dt}=0.0025,  \path{schedule.schedule_mode}=\texttt{log} and \path{schedule.schedule_p}=2.0.
      \item \textbf{Stochastic term.} \path{sampling_caflow.sampling_mode}=\texttt{sc}, \path{sampling_caflow.sc_scale_noise}=0.4, \path{sampling_caflow.sc_scale_score}=1.0, \path{sampling_caflow.gt_mode}=\texttt{1/t}, \path{sampling_caflow.gt_p}=1.0, and \path{sampling_caflow.gt_clamp_val}=\texttt{null}.
      \item \textbf{Conditioning.} \path{self_cond}=\texttt{False}, \path{fold_cond}=\texttt{False}, \path{guidance_weight}=1.0, and \path{autoguidance_ckpt_path}=\texttt{null}. 
      \item \textbf{Feedback state.} We retain the C$\alpha$ flow state nearest $t=0.5$ for subsequent exploit samples.
    \end{itemize}
\end{itemize}

\paragraph{RMSD computations.}
Each RMSD is computed after a separate least-squares rigid alignment using Biopython's \texttt{Superimposer}~\citep{cock2009biopython}. Backbone RMSD compares all C$\alpha$ atoms of the generated backbone and the ESMFold refold. Motif RMSD compares the prescribed motif's N, C$\alpha$, and C atoms with the corresponding atoms in the refold. Scaffold RMSD excludes motif residues for Proteina but includes the entire generated chain for RFdiffusion, where it equals backbone RMSD.

For the ground-truth label, we run AlphaFold~3~\citep{abramson2024accurate} on the winning sequence of each round after all trajectories have been generated. Using the recorded motif positions and scaffold mask, a proposal is successful if its top-ranked prediction has motif N--C$\alpha$--C RMSD at most 0.75~\AA{}, scaffold-mask C$\alpha$ RMSD at most 1.5~\AA{}, pTM at least 0.80, and mean pLDDT at least 85.

\paragraph{Roundwise likelihood-ratio estimation.}
At round $t$, we use the feature $X_{i,t}=(S_{i,t-1},S_{i,t})\in\mathbb{R}^2$, with $S_{i,0}=0$. We fit a separate estimator for each generator, deployment fold, split seed, and round. Let $\mathcal{A}_t$ and $\mathcal{N}_t$ index evidence trajectories with AlphaFold~3 labels one and zero, respectively. We standardize each feature coordinate using only the observations indexed by $\mathcal{A}_t$.

On the standardized features, we use the Gaussian kernel
\begin{equation*}
  K_h(x,x')
  =
  \exp\left(-\frac{\lVert x-x'\rVert_2^2}{2h^2}\right),
  \qquad h=1.
\end{equation*}
For every coordinate $a$ in the augmented null pool $\widetilde{\mathcal N}_t$, the alternative and leave-one-out null density estimates are
\begin{equation*}
  \widehat p_{1,t}(X_{a,t})
  =
  \frac{1}{|\mathcal A_t|}
  \sum_{i\in\mathcal A_t}K_h(X_{a,t},X_{i,t}),
  \qquad
  \widehat p_{0,t,-a}(X_{a,t})
  =
  \frac{1}{|\widetilde{\mathcal N}_t|-1}
  \sum_{j\in\widetilde{\mathcal N}_t\setminus\{a\}}
  K_h(X_{a,t},X_{j,t}).
\end{equation*}
We define
\begin{equation*}
  \widehat L_{t,-a}(X_{a,t})
  =
  \operatorname{clip}_{[10^{-3},10^3]}
  \left(
    \frac{\widehat p_{1,t}(X_{a,t})+10^{-6}}
         {\widehat p_{0,t,-a}(X_{a,t})+10^{-6}}
  \right)
\end{equation*}
and apply the index-betting normalization in Equation~\ref{eq:split-evalue}. If either evidence class is empty, we set $E_t=1$. This case did not occur in the reported evaluation.

We use the 100 calibration trajectories to select the online $e$-BH scale through robust-core CRC. The held-out labels are used only to evaluate stopped FDP.

We evaluate $\alpha\in\{0.05,0.10,0.15,0.20\}$ and $\kappa\in\{1,5,10\}$ using the scale grid
\begin{equation*}
  \Lambda
  =
  \bigl\{1.5^j \mathrel{\big|} j\in\{0,\ldots,7\}\bigr\}
  \cup
  \{20\}.
\end{equation*}
 
\end{document}